\newcommand{\cV}{\mathcal{V}} 
\newcommand{\cE}{\mathcal{E}} 
\newcommand{\bA}{\mathbf{A}} 
\newcommand{\bD}{\mathbf{D}} 
\newcommand{\bx}{\mathbf{x}} 
\newcommand{\RR}{\mathbb{R}}
\newcommand{\bW}{\mathbf{W}} 
\newcommand{\cX}{\mathcal{X}}
\newcommand{\cD}{\mathcal{D}}
\renewcommand{\P}{\mathbf{P}}
\newcommand{\A}{\mathbf{A}}
\newcommand{\D}{\mathbf{D}}
\newcommand{\V}{\mathcal{V}}
\newcommand{\func}[1]{\mathtt{#1}}
\newcommand{\cmark}{\ding{51}}%
\newcommand{\xmark}{\ding{55}}
\tikzset{
	-,  
	>=stealth, 
	shorten >=2pt, shorten <=2pt, 
	node distance=2.5cm, 
	every state/.style={draw=blue!55,very thick,fill=blue!20}, 
	initial text=$ $, 
 }
\renewcommand{\cite}[1]{\citep{#1}}
\theoremstyle{plain}
\newtheorem{theorem}{Theorem}[section]
\newtheorem{proposition}[theorem]{Proposition}
\newtheorem{lemma}[theorem]{Lemma}
\theoremstyle{definition}
\theoremstyle{remark}
\icmltitlerunning{CKGConv: General Graph Convolution with Continuous Kernels}
\begin{document}

\twocolumn[
\icmltitle{CKGConv: General Graph Convolution with Continuous Kernels}




\icmlsetsymbol{equal}{*}
\icmlsetsymbol{intern}{$^\dagger$}

\begin{icmlauthorlist}
\icmlauthor{Liheng Ma}{mcgill,mila,ills,intern}
\icmlauthor{Soumyasundar Pal}{huawei}
\icmlauthor{Yitian Zhang}{mcgill,mila,ills,intern}
\icmlauthor{Jiaming Zhou}{huawei}
\icmlauthor{Yingxue Zhang}{huawei}
\icmlauthor{Mark Coates}{mcgill,mila,ills}
\end{icmlauthorlist}

\icmlaffiliation{mcgill}{Department of ECE, McGill University, Montreal, Canada}
\icmlaffiliation{mila}{Mila - Quebec AI Institute, Montreal, Canada} 
\icmlaffiliation{ills}{ILLS - International Laboratory on
Learning Systems, Montreal, Canada}
\icmlaffiliation{huawei}{Huawei Noah's Ark Lab, Montreal, Canada}

\icmlcorrespondingauthor{Liheng Ma}{liheng.ma@mail.mcgill.ca}

\icmlkeywords{Graph Convolution; Graph Neural Networks; Graph Transformer; Continuous Kernel}

\vskip 0.3in
]

\newcommand{\mlh}[1]{#1}
\newcommand{\mlhc}[1]{\color{red}[Liheng:#1] 
\color{black}}



\printAffiliationsAndNotice{}{\icmlIntern} 

\begin{abstract}
The existing definitions of graph convolution, either from spatial or spectral perspectives, 
are inflexible and not unified.
Defining a general convolution operator in the graph domain is challenging due to the lack of canonical coordinates, the presence of irregular structures, and the properties of graph symmetries.
In this work, we propose a 
\mlh{novel and general} graph convolution framework by parameterizing the kernels as continuous functions of pseudo-coordinates derived via graph positional encoding.  We name this Continuous Kernel Graph Convolution (CKGConv).
Theoretically, we demonstrate that CKGConv is flexible and expressive.
CKGConv encompasses many existing graph convolutions, and \mlh{exhibits a stronger expressiveness, as powerful as graph transformers} in terms of distinguishing non-isomorphic graphs.
Empirically, we show that CKGConv-based Networks outperform existing graph convolutional networks and perform comparably to the best graph transformers across a variety of graph datasets. 
The code and models are publicly available at \url{https://github.com/networkslab/CKGConv}.

\end{abstract}


\section{Introduction}
Recent advances in applying Transformer architectures in computer vision ignited a competition with the predominant Convolutional Neural Networks (ConvNets)~\cite{he2016DeepResidualLearning, tan2019EfficientNetRethinkingModel}. This rivalry started when Vision Transformers (ViTs)~\cite{dosovitskiy2021ImageWorth16x16,wang2021PyramidVisionTransformer, liu2021SwinTransformerHierarchical, liu2022SwinTransformerV2} exhibited impressive empirical gains over the \mlh{best} ConvNet architectures of the time.
However, several recent ConvNet variants~\cite{liu2022ConvNet2020s, woo2023ConvNeXtV2CoDesigning} achieve performance \mlh{comparable to that of} ViTs by incorporating innovative designs such as larger kernels and depthwise convolutions~\cite{chollet2017XceptionDeepLearning}.


\begin{figure}[t!]
    \centering
    \vskip -0.1in
\includegraphics[width=0.37\textwidth]{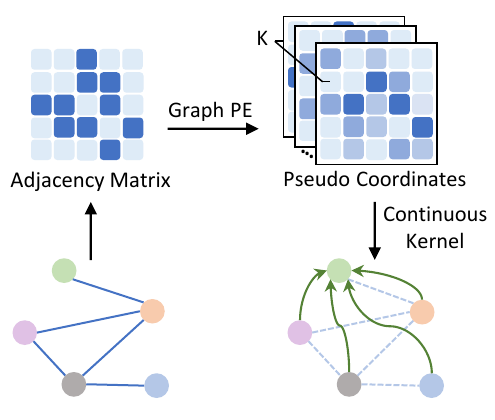}
\vspace{-0.1in}
    \caption{Continuous Kernel Graph Convolution (CKGConv)}
    \label{fig:structure}
    \vskip -0.15in
\end{figure}

In contrast, the appeal of 
Convolutional Graph Neural Networks (GNNs) 
seems to be diminishing;
Graph Transformers (GTs) demonstrate elevated efficacy on many challenging graph learning tasks~\cite{ying2021TransformersReallyPerform, rampasek2022RecipeGeneralPowerful, zhang2023RethinkingExpressivePower, ma2023GraphInductiveBiases}.
One reason might be that\mlh{, unlike convolutions in Euclidean space, }
most existing definitions of graph convolution are inflexible and not unified.
Message-passing Neural Networks (MPNNs)~\cite{gilmer2017NeuralMessagePassing, velickovic2022MessagePassingAll} are defined in the spatial domain and limited to a one-hop neighborhood;
Spectral GNNs~\cite{bruna2014SpectralNetworksLocally} are defined from a graph-frequency perspective
and require careful designs (e.g., polynomial approximation~\cite{defferrard2016ConvolutionalNeuralNetworks, wang2022HowPowerfulAre} or a \mlh{sophisticated} transformer-encoder~\cite{bo2023SpecformerSpectralGraph}) to generalize to unseen graphs.

\mlh{Unlike the general Euclidean convolution operators, there is no convolution operator for graphs that permits flexible determination of the support.}
Defining a general convolution operator in the graph domain is challenging due to the characteristics of a graph: the lack of canonical coordinates, the presence of irregular structures, and graph symmetries. These aspects are fundamentally different from Euclidean spaces.
By addressing the aforementioned challenges,
we generalize the continuous kernel convolution~\cite{romero2022CKConvContinuousKernel} to graph domain, 
and propose a general convolution framework, namely \emph{Continuous Kernel Graph Convolution} (CKGConv, as shown in Fig.~\ref{fig:structure}).
This subsumes several common convolutional GNNs, including non-dynamic\footnote{We use the term \emph{dynamic} to denote filters/kernels, generated dynamically conditioned on an input~\cite{ jia2016DynamicFilterNetworks}. \emph{Filter} and \emph{kernel} are used interchangeably in this work.}
MPNNs,
polynomial spectral GNNs, and diffusion-enhanced GNNs.
We propose three designs to address the challenges of the graph domain:
(1) pseudo-coordinates for graphs via positional encoding (addressing the lack of canonical coordinates and handling symmetries); 
(2) numerically stable scaled-convolution (compensating for irregular structure);
(3) adaptive degree-scalers (mitigating the impact of symmetries).

CKGConv is endowed with several desirable properties. It exhibits immunity against over-smoothing and over-squashing. It encompasses Equivariant Set Neural Networks~\cite{segol2020UniversalEquivariantSet} as a special case for the setting where there is no observed graph.
Furthermore, we theoretically prove that CKGConv with generalized distance (GD) can be as powerful as Generalized Distance Weisfeiler-Lehman (GD-WL)~\cite{zhang2023RethinkingExpressivePower} in graph isomorphism tests (and is thus more powerful than 1-WL).
Our experiments demonstrate the effectiveness of the proposed framework.
CKGConv ranks among the best-performing models in a variety of graph learning benchmarks and significantly outperforms existing convolutional GNNs.

Our contributions are summarized as follows:
\begin{itemize}[leftmargin=*,noitemsep,topsep=0pt]
    \item We propose a novel graph convolution framework based on \textit{pseudo-coordinates} of graphs and \textit{continuous convolution kernels}.
    
    \item We demonstrate theoretically and empirically that our proposed graph convolution is expressively powerful, \textit{outperforming existing convolutional GNNs} and achieving \textit{comparable performance to the state-of-the-art graph transformers}. 
    \item Various exploratory studies illustrate that \mlh{CKGConv displays different and potentially complementary behavior \textit{from graph transformers} that employ attention mechanisms.}
    This motivates the combination of CKGConv and attention mechanisms as a potential direction towards designing more powerful graph models. 
\end{itemize}

\section{Related Work}
\textbf{Message-Passing Neural Networks} \emph{MPNNs}~\cite{gilmer2017NeuralMessagePassing, velickovic2022MessagePassingAll} are a class of GNNs widely used in graph learning tasks.
To update the representation of a node $i$, MPNNs aggregate the features from the \mlh{direct} neighbors of $i$.
In most cases, the message-passing mechanisms can be viewed as convolution with kernels locally supported on a one-hop neighborhood. The kernels may be
fixed~\cite{kipf2017SemiSupervisedClassificationGraph, xu2019HowPowerfulAre, hamilton2017InductiveRepresentationLearning}, learnable~\cite{monti2017GeometricDeepLearning},
or dynamic~\cite{velickovic2018GraphAttentionNetworks, bresson2018ResidualGatedGraph}.
Recent 
\emph{Graph Rewiring} techniques extend MPNNs beyond one-hop by introducing additional edges, guided by curvature~\cite{topping2022UnderstandingOversquashingBottlenecks}, spectral gap~\cite{arnaiz-rodriguez2022DiffWireInductiveGraph}, geodesic distance~\cite{gutteridge2023DRewDynamicallyRewired}, or positional encoding (PE)~\cite{gabrielsson2023RewiringPositionalEncodings}. Despite some similarity to our work, the usage of PE by~\citet{gabrielsson2023RewiringPositionalEncodings} is constrained by the MPNN framework and lacks flexibility.

\textbf{Spectral and Polynomial Graph Neural Networks} In contrast to MPNNs,
\emph{Spectral Graph Neural Networks} define graph filters in the spectral domain.
The pioneering approach by~\citet{bruna2014SpectralNetworksLocally} cannot generalize to an unseen graph with a different number of nodes.
Follow-up works, constituting the class of \emph{Polynomial Spectral Graph Neural Networks}, address this by 
(approximately)
parameterizing the spectral filters by a polynomial of the (normalized) graph Laplacian~\cite{defferrard2016ConvolutionalNeuralNetworks, he2021BernNetLearningArbitrary, liao2022LanczosNetMultiScaleDeep, wang2022HowPowerfulAre}.
Similarly, \emph{Diffusion Enhanced Graph Neural Networks} extend spatial filters beyond one-hop by a polynomial of diffusion operators (e.g., adjacency matrix, random walk matrix) ~\cite{gasteiger2019DiffusionImprovesGraph, gasteiger2019PredictThenPropagate,  chien2021AdaptiveUniversalGeneralized, zhao2021AdaptiveDiffusionGraph, frasca2020SIGNScalableInception, chamberlain2021GRANDGraphNeural}, exhibiting strong connections to spectral GNNs.
Notably, besides the polynomial approach, 
recent works endeavor to generalize spectral GNNs by introducing extra graph-order invariant operations on eigenfunctions of graph Laplacian~\cite{beani2021DirectionalGraphNetworks, bo2023SpecformerSpectralGraph}.

\textbf{Graph Transformers} \emph{GTs}
are equipped with the transformer architecture~\cite{vaswani2017AttentionAllYou}, consisting of self-attention mechanisms (SAs) and feed-forward networks.
Directly migrating the transformer architecture to graph learning tasks
cannot properly utilize the topological information in graphs 
and 
leads to poor performance~\cite{dwivedi2021GeneralizationTransformerNetworks}. 
Modern graph transformer architectures address this by integrating message-passing mechanisms~\cite{kreuzer2021RethinkingGraphTransformers, chen2022StructureAwareTransformerGraph, rampasek2022RecipeGeneralPowerful} or incorporating graph positional encoding (PE)~\cite{kreuzer2021RethinkingGraphTransformers, ying2021TransformersReallyPerform,  zhang2023RethinkingExpressivePower,
ma2023GraphInductiveBiases}.
Appendix~\ref{appx:graph_pe} provides more detail about graph PE.

\textbf{Expressiveness of Graph Neural Networks}
Graph isomorphism tests have been widely used to measure the theoretical expressiveness of GNNs in terms of their ability to encode topological patterns.
Without additional elements, MPNNs' expressive power is bounded by first-order  Weisfeiler-Lehman ($1$-WL) algorithms~\cite{xu2019HowPowerfulAre}. Polynomial spectral GNNs are as powerful as $1$-WL algorithms; it is not known if this is a bound~\cite{wang2022HowPowerfulAre}.
Higher-order GNNs~\cite{morris2019WeisfeilerLemanGo} \mlh{can reach the same expressive power} as $K$-WL
\mlh{with a cost of $O(N^K)$ computational complexity.}
Recently, \citet{zhang2023RethinkingExpressivePower} demonstrate that,
\mlh{under the $O(N^2)$ complexity constraint,} 
Graph Transformers with generalized distance (GD) can go beyond $1$-WL but are still bounded by $3$-WL.

\textbf{Continuous Kernel Convolution in Euclidean Spaces}
In order to handle irregularly sampled data and data at different resolutions,
\citet{romero2022CKConvContinuousKernel} and \citet{ knigge2023ModellingLongRange}  propose learning a convolution kernel as a continuous function, parameterized by a simple neural network, of the coordinates (relative positions), resulting in \emph{Continuous Kernel Convolution}.
This enables the convolution to generalize to any arbitrary support size with the same number of learnable parameters.
Driven by different motivations, 
several works have explored similar ideas for \emph{point cloud data}~\cite{hermosilla2018MonteCarloConvolution, wu2019PointConvDeepConvolutional, xu2018SpiderCNNDeepLearning,
hua2018PointwiseConvolutionalNeural}.
From a broader perspective, 
continuous kernels can be viewed as a subdomain of \emph{Implicit Neural Representation}~\cite{mildenhall2020NeRFRepresentingScenes, sitzmann2020ImplicitNeuralRepresentations, tancik2020FourierFeaturesLet}, where the representation targets are the convolution kernels.
Note that these techniques \emph{rely on canonical coordinates} in Euclidean spaces and \emph{cannot be directly applied to non-Euclidean domains} like graphs.

\section{Methodology}

\subsection{Preliminary: Continuous Convolution Kernels}

Let $x: \mathbb{Z} \to \RR$ and $\psi: \mathbb{Z} \to \RR$ be two scalar-valued real sequences sampled on the set of integers $\mathbb{Z}$, where $x[k]$ and $\psi[k]$ denote the signal and filter impulse response (kernel) at time $k$, respectively. 
The discrete convolution between the signal and the kernel at time $k$ is defined as follows:
\begin{align}
(x \star \psi)[k] := \sum_{\ell \in \mathbb{Z}} x[\ell] \psi[k-\ell]\,,\label{eq:conv_iir}
\end{align}
In most cases, the kernel is of finite width $n_{\psi}$, i.e., $\psi[k]=0$ if $k<0$ or $k\geqslant n_{\psi}$. 
The convolution sum in Eq.~\eqref{eq:conv_iir} is accordingly truncated at $\ell \notin [k-n_{\psi}+1, k]$.  
However, learning such fixed support, discrete kernels cannot be generalized to arbitrary widths (i.e., different $n_{\psi}$) with the same set of parameters.

To address this shortcoming, ~\citet{romero2022CKConvContinuousKernel} propose to learn convolutional kernels by parameterizing $\psi[k]$ via a continuous function of $k$, implemented using a small neural network (e.g., multi-layer perception (MLP)). This is termed Continuous Kernel Convolution (CKConv).  
This formulation allows CKConv to model long-range dependencies and handle irregularly sampled data in Euclidean spaces.

\subsection{Graph Convolution with Continuous Kernels}

In the graph domain, convolution operators are required to handle varying sizes of supports, due to a varying number of nodes and the irregular structures.
We explore the potential of continuous kernels for graphs
and propose a general graph convolution framework with continuous kernels.

The generalization of continuous kernels to graph domain is not trivial due to the following characteristics:
(1) the \emph{lack of canonical coordinates}~\cite{bruna2014SpectralNetworksLocally} makes it difficult to define the relative positions between non-adjacent nodes;
(2) the \emph{irregular structure} requires the kernel to generalize to different support sizes while retaining numerical stability~\cite{velickovic2020NeuralExecutionGraph, corso2020PrincipalNeighbourhoodAggregation};
(3) the presence of \emph{graph symmetries} demands that the kernel can distinguish between nodes in the support without introducing permutation-sensitive operations~\cite{hamilton2017InductiveRepresentationLearning} or extra ambiguities~\cite{lim2023SignBasisInvariant}.

These challenges drive us to propose a general graph convolution framework with continuous kernels, namely CKGConv.
Our overall design consists of three innovations:
(1) we use graph positional encoding to derive the pseudo-coordinates of nodes and define relative positions;
(2) we introduce a scaled convolution to handle the irregular structure; 
(3) we incorporate an adaptive degree-scaler to improve the representation of structural graph information.

\subsubsection{Graph Positional Encoding as Pseudo-coordinates}

\newcommand{\rw}{\mathbf{M}}

In contrast to Euclidean spaces, 
the graph domain is known to lack canonical coordinates.
Consequently, it is not trivial to define the relative distance between two non-adjacent nodes.
Pioneering work~\cite{monti2017GeometricDeepLearning} attempted to define \emph{pseudo-coordinates} for graphs, however, the definition was restricted to \mlh{one-hop neighborhoods.}

In this work, we reveal that
\mlh{pseudo-coordinates can be naturally defined by graph positional encoding (PE), 
allowing us to specify relative positions for continuous kernels beyond the one-hop neighborhood constraint.}
Specifically, we use Relative Random Walk Probabilities (RRWPs)~\cite{ma2023GraphInductiveBiases}, 
which have been demonstrated to be one of the most expressive graph positional encodings~\cite{black2024ComparingGraphTransformers}.
Let $\bA \in \RR^{N \times N}$ be the adjacency matrix of a graph $\mathcal{G}=(\cV, \cE)$ with $N$ nodes, and let $\D$ be the diagonal degree matrix, $\bD_{i,i}=\sum_{j \in \cV} \bA_{i,j}$. 
The random walk matrix is $\rw := \D^{-1}\A$. Entry $\rw_{ij}$ is then the probability of a move from node $i$ to node $j$ in one step of a simple random walk.
The (top) $K$-RRWP for each pair of nodes $i, j \in \V$ consists of zeroth to $(K{-}1)$th powers of random walk matrix, defined as:
\begin{align}
    \P_{i,j} = [\mathbf{I}, \rw, \rw^2, \dots, \rw^{K-1}]_{i,j} \in \mathbb{R}^K\,,
\end{align}
where $\mathbf{I} \in \RR^{N \times N}$ denoting the identity matrix. 
We add an extra re-scaling on RRWP to remove the dependency on graph-orders (details in Appendix~\ref{appx:re-normalization}).

RRWP is not the only choice for constructing pseudo-coordinates. 
One can use other graph positional encodings such as shortest-path-distance (SPD)~\cite{ying2021TransformersReallyPerform} and resistance distance (RD)~\cite{zhang2023RethinkingExpressivePower}.

\subsubsection{Numerically Stable Graph Convolution}
\label{sec:inv_conv}


When applying a kernel to different nodes in graphs, the support size can \mlh{vary remarkably}. 
\mlh{To ensure numerical stability and the ability to generalize}, it is crucial
to avoid 
disproportionate scaling of different node representations~\cite{velickovic2020NeuralExecutionGraph}. 

\mlh{Therefore, we introduce a scaling term to perform scaled convolution in CKGConv. } We consider a kernel function $\psi: \RR^r \to \RR$. 
For a graph $\mathcal{G}=(\cV, \cE)$ with node-signal function $\chi: \cV \to \RR$, 
CKGConv is defined as:\footnote{\mlh{In the Euclidean domain, conventional {\em convolution} involves {\em reversal} and {\em shifting} of the filter (kernel). The meaning of reversal is not obvious in the graph domain. Although there is no explicit reversal in our procedure, the kernel $\psi$ is a mapping from a positive relative positional encoding $\mathbf{P}_{i,j}$. For each node $i$, we can thus view it as a symmetric filter with respect to a corresponding absolute positional encoding (that we do not specify), and reversal would not change the filtering coefficient for a node $j$. We therefore retain the usage of the terminology \emph{convolution} in this work.}}
\begin{equation}
    (\chi \star \psi)(i) := \frac{1}{|\operatorname{supp}_\psi(i)|}\sum_{j \in \operatorname{supp}_\psi(i)} \chi(j)\cdot \psi(\P_{i,j}) + b\,.\label{eq:ck-gconv}
\end{equation}
Here $b \in \RR$ is a learnable bias term; the set $\operatorname{supp}_\psi(i)$ is the predefined support of kernel $\psi$ for node $i$ (i.e., $|\operatorname{supp}_\psi(i)|$ denotes the kernel size); and $\P_{i,j} \in \RR^K$ is the relative positional encoding.

\mlh{Owing to the flexibility of CKGConv,} we can set $\operatorname{supp}_\psi(i)$ to be the $K$-hop
neighborhood\footnote{The $K$-hop neighborhood of node $i$ is the set of nodes whose shortest-path distance from node $i$ is smaller than or equal to $K$.} of node $i$, \mlh{with $K$ being an arbitrary positive integer.} Alternatively, we can choose the support to be the entire graph, thereby constructing a global kernel.
This flexibility arises because the
construction of pseudo-coordinates is decoupled from the evaluation of the convolution kernel.
We show that the globally supported variant is endowed with several desired theoretical properties (Sec.~\ref{sec:expressiveness} and Sec.~\ref{sec:eq_set_nn}).

\renewcommand{\mlh}[1]{#1}

\subsubsection{Depthwise Separable Convolution}

We extend the scalar-valued definition of CKGConv (shown in Eq.~\eqref{eq:ck-gconv}) to vector-valued signals ($\boldsymbol{\chi}: \cV \to \RR^d$ and $(\boldsymbol{\chi} \star \boldsymbol{\psi})(i): \cV \to \RR^{d'}$) via
the Depthwise Separable Convolution (DW-Conv) architecture~\cite{chollet2017XceptionDeepLearning},  
\begin{equation}
    (\boldsymbol{\chi} \star \boldsymbol{\psi})(i) :=  \mathbf{W}\big(\tfrac{1} {|\operatorname{supp}_\psi(i)|}\sum_{j \in \operatorname{supp}_\psi(i)} \boldsymbol{\chi}(j) \odot \boldsymbol{\psi}(\P_{i,j})\big) + \mathbf{b} \,.  
    \label{eq:ck-gconv-dw}
\end{equation}
Here $\boldsymbol{\psi}: \RR^K \to \RR^d$ is a kernel function acting on a vector;  $\mathbf{W} \in \RR^{d' \times d}$ and $\mathbf{b} \in \RR^{d'}$ are the learnable weights and bias, respectively, shared by all nodes; and $\odot$ stands for elementwise multiplication.

We can alternatively extend Eq.~\eqref{eq:ck-gconv} to multiple channels via grouped convolution~\cite{krizhevsky2012ImageNetClassificationDeep}, multi-head architectures~\cite{vaswani2017AttentionAllYou}, or even MLP-Mixer architectures~\cite{tolstikhin2021MLPMixerAllMLPArchitecture, touvron2023ResMLPFeedforwardNetworks}.
We select DWConv because it provides a favorable trade-off between expressiveness and the number of parameters.

\subsubsection{MLP-based Kernel Function}

In this work, as an example, we introduce kernel functions parameterized by multi-layer perceptrons (MLPs), but the proposed convolution methodology accommodates many other kernel functions.
Each MLP block consists of fully connected layers ($\func{FC}$), non-linear activation ($\sigma$), a normalization layer ($\func{norm}$) and a residual connection, 
inspired by ResNetv2~\cite{he2016IdentityMappingsDeep}:
\begin{equation}
   \func{MLP}(\mathbf{x}) := \mathbf{x} +  \func{FC} \circ \sigma \circ \func{Norm} \circ \func{FC} \circ \sigma \circ \func{Norm}(\mathbf{x})\,.
   \label{eq:mlp_block}
\end{equation}
Here $\circ$ denotes function composition; 
$\func{FC}(\mathbf{x}):= \mathbf{W}\mathbf{x}+\mathbf{b}$, with learnable weight matrix, $\mathbf{W} \in \RR^{r \times r}$, and bias, $\mathbf{b} \in \RR^r$; and we use GELU~\cite{hendrycks2023GaussianErrorLinear} as the default choice of $\sigma$.

The overall kernel function $\boldsymbol{\psi}$ is defined as:
\begin{equation}
\boldsymbol{\psi}(\mathbf{\P_{i,j}}):= \func{FC}\circ \func{Norm}\circ \func{MLP} \circ \dots \circ \func{MLP}(\P_{i,j})\,,
\end{equation}
where the last $\func{FC}: \RR^r \to \RR^{d}$ maps to the desired number of output channels.

\subsubsection{Degree Scaler}
As a known issue in graph learning, the scaled convolutions and mean-aggregations cannot properly preserve the degree information of nodes~\cite{xu2019HowPowerfulAre,corso2020PrincipalNeighbourhoodAggregation}. 
Therefore, we introduce a post-convolution adaptive degree-scaler into 
the node representation to recover such information,
following the approach proposed by~\citet{ma2023GraphInductiveBiases}:
\begin{equation}
   \begin{aligned}
    \bx_{i}' :=  \bx_{i} \odot \boldsymbol{\theta}_1 
    + \left(d_i^{1/2} \cdot \bx_{i} \odot \boldsymbol{\theta}_2 \right) \in \mathbb{R}^d \,.
   \end{aligned}
   \label{eq:deg_scaler}
\end{equation}
Here $d_i \in \RR$ is the degree of node $i$, and $\boldsymbol{\theta}_1, \boldsymbol{\theta}_2 \in \RR^r$ are learnable weight vectors.

As an alternative, we also introduce a variant that injects the degree information directly into the RRWP, $\P_{i,j} \in \RR^r$, before applying the kernel function 
$\psi$:
\begin{equation}
    \hat{\P}_{i,j} \! := \! \P_{i,j} \odot \, \boldsymbol{\theta}_1 
    + \left( (d_i^{1/2} \odot \boldsymbol{\theta}_2) \odot \P_{i,j} \odot (d_j^{-1/2} \odot \boldsymbol{\theta}_3)  \right)\,, 
\end{equation}
where $\boldsymbol{\theta}_1, \boldsymbol{\theta}_2, \boldsymbol{\theta}_3 \in \RR^r$ are learnable parameters and $\hat{\P}_{i,j}$ is used instead of $\P_{i,j}$ in other parts.
This variant enjoys several desired theoretical properties as discussed in Sec.~\ref{sec:poly_spectral}.
However, in practice, we did not observe any significant differences in empirical performance, and use Eq.~\eqref{eq:deg_scaler} in our experiments due to its computational efficiency.

\subsubsection{Overall Architecture of CKGCN}

The overall multi-layer architecture of the proposed model, denoted by Continuous Kernel Graph Convolution Network~(CKGCN), 
consists of $L$ CKGConv-blocks as the backbone, together with task-dependent output heads (as shown in Fig.~\ref{fig:arch} in Appendix~\ref{appx:arch}). 
Each CKGConv block consists of a CKGConv layer and a feed-forward network (FFN), with residual connections and a normalization layer:
\begin{equation}
    \func{CKGConvBlock}(\cdot)\!:=\! \func{norm}\hspace{0.05em}\circ \hspace{0.05em}\func{FFN}\hspace{0.05em}\circ\hspace{0.05em} \func{norm} \hspace{0.05em}\circ\hspace{0.05em} \func{CKGConv} (\cdot)\,. 
\end{equation}
We use BatchNorm~\cite{ioffe2015BatchNormalizationAccelerating} in the main branch as well as in the kernel functions. Using LayerNorm~\cite{ba2016LayerNormalization} has the potential to cancel out the degree information~\cite{ma2023GraphInductiveBiases}.
Appendix~\ref{appx:deg_and_LN} presents additional architectural details.
The input node/edge attributes ($\mathbf{x}'_i \in \RR^{d_h}$ and $\mathbf{e}'_{i,j} \in \RR^{d_e}$) and the absolute/relative positional encoding (RRWP) are concatenated: $\mathbf{x}_i  = [\mathbf{x}'_i \| \P'_{i,i}] \in \RR^{d_h + K}$ and $\P_{i,j} = [\mathbf{e}'_{i,j} \| \P'_{i,j}] \in \RR^{d_e + K}$, where $\P'_{i,j}$ denotes the input PE.
A linear projection (a {\em stem}) maps to the desired dimensions before the backbone. 
If the data does not include node/edge attributes, zero-padding is used. For a fair comparison, we use the same task-dependent output heads as previous work~\cite{rampasek2022RecipeGeneralPowerful}.

\subsection{Theory: CKGCN Is as Expressive as Graph Transformers}
\label{sec:gd-wl}
\label{sec:expressiveness}

\citet{zhang2023RethinkingExpressivePower} prove that Graph Transformers with generalized distance (GD) can be as powerful as GD-WL with a proper choice of attention mechanisms,
thus going beyond 1-WL and bounded by 3-WL.
We provide a similar constructive proof, demonstrating that CKGConv with GD is as powerful as GD-WL. 
It achieves the same theoretical expressiveness as SOTA graph transformers~\cite{zhang2023RethinkingExpressivePower, ma2023GraphInductiveBiases}, with respect to the GD-WL test.

\begin{proposition}\label{prop:ckgconv_gdwl}
A Continuous Kernel Graph Convolution Network~(CKGCN), stacking feed-forward networks (FFNs) and globally supported CKGConvs with generalized distance (GD) as pseudo-coordinates, 
is as powerful as GD-WL, when choosing the proper kernel $\boldsymbol{\psi}$.
\end{proposition}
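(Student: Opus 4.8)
The plan is to follow the standard ``as powerful as $X$-WL'' template (as in the GIN analysis of \citet{xu2019HowPowerfulAre}) and argue in two directions. The easy direction is the upper bound: since each globally supported CKGConv block updates the representation of node $i$ using only its own feature and the multiset of pairs $\{\!\!\{(\mathbf{P}_{i,j}, \boldsymbol{\chi}(j)) : j \in \cV\}\!\!\}$, and $\mathbf{P}_{i,j}$ is a fixed (injective) encoding of the generalized distance $d(i,j)$, an induction on the number of blocks shows that the CKGCN coloring can never be strictly finer than the GD-WL coloring. The substance of the proof is therefore the lower bound: constructing kernels $\boldsymbol{\psi}$, weights $\mathbf{W},\mathbf{b}$, and FFNs so that the CKGCN coloring is at least as fine as GD-WL at every iteration.

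The core lemma I would establish is that the scaled depthwise aggregation in \eqref{eq:ck-gconv-dw}, namely $\tfrac{1}{N}\sum_{j}\boldsymbol{\chi}(j)\odot\boldsymbol{\psi}(\mathbf{P}_{i,j})$, can injectively encode the GD-WL multiset $\{\!\!\{(d(i,j),\boldsymbol{\chi}(j))\}\!\!\}$. The obstacle here is that the depthwise structure couples the signal and the kernel only through an elementwise product, rather than an arbitrary joint function of $(\boldsymbol{\chi}(j),\mathbf{P}_{i,j})$; I must engineer the channel layout so that this product still recovers the full joint statistic. Assuming colors live in a countable set that I one-hot encode as $\mathbf{e}_{c(j)}\in\RR^{C}$, and that on a finite graph the generalized distance takes finitely many values $r_1,\dots,r_D$, I would widen the representation to $C\cdot D$ channels, tile $\boldsymbol{\chi}(j)$ so that each color indicator is repeated $D$ times, and choose the MLP kernel of \eqref{eq:mlp_block} to output, in the $(c,l)$ channel, the distance indicator $\mathbf{1}[d(i,j)=r_l]$ (realizable since the MLP is a universal approximator acting on $\mathbf{P}_{i,j}$, which encodes $d(i,j)$). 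The elementwise product in channel $(c,l)$ is then $\mathbf{1}[c(j)=c]\cdot\mathbf{1}[d(i,j)=r_l]$, and summing over $j$ yields the scaled joint histogram $\tfrac{1}{N}\#\{j:c(j)=c,\ d(i,j)=r_l\}$, which determines the multiset. Since the self-term $j=i$ (with $d(i,i)=0$) is included in the global support, the node's own color is retained, matching GD-WL.

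With the lemma in hand, I would close the argument by induction on the iteration count $t$. The base case identifies the input features with the initial GD-WL colors. For the inductive step, the aggregation lemma produces the joint histogram, which is a countable-valued statistic; composing with the linear map $\mathbf{W}$, the bias, and the block's FFN lets me realize an injective ``hash'' on this countable value set (again invoking universal approximation of MLPs, exactly as in \citet{xu2019HowPowerfulAre}), so the updated CKGCN feature is an injective function of $(\boldsymbol{\chi}^{t-1}(i),\{\!\!\{(d(i,j),\boldsymbol{\chi}^{t-1}(j))\}\!\!\})$, i.e.\ of the GD-WL color at step $t$. Two technical points remain to dispatch: the $1/N$ normalization and the degree-scaler/bias are identical across two graphs of equal order (the only nontrivial case for isomorphism testing), and the graph-order rescaling of Appendix~\ref{appx:re-normalization} removes any residual order dependence, so none of these harms injectivity. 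Combining the lower and upper bounds gives that CKGCN is exactly as powerful as GD-WL, hence strictly beyond $1$-WL and bounded by $3$-WL.

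The hard part will be the core lemma, specifically forcing the depthwise/elementwise-product aggregation to express a genuinely joint $(d(i,j),\boldsymbol{\chi}(j))$ statistic rather than a separable one; the tiling-plus-one-hot construction is what I expect to require the most care, together with verifying that the MLP kernel can produce the required distance-indicator features from the GD pseudo-coordinates.
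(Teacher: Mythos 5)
Your proposal is correct and takes essentially the same route as the paper's proof: both exploit the finiteness of the set of GD values, build the kernel from distance-indicator sub-kernels occupying disjoint channel blocks (the paper's $\boldsymbol{\psi}^h(d)=\mathbb{I}(d=d_{G,h})\cdot\mathbf{1}$ is exactly your per-channel indicator), reduce the depthwise elementwise product to a per-distance sum-aggregation, and then invoke Lemma~5 of \citet{xu2019HowPowerfulAre} plus an FFN hash to injectively encode the tuple of per-distance color multisets, matching the paper's reformulation of the GD-WL update. Your explicit one-hot/joint-histogram tiling is a concrete instantiation of the paper's more abstract appeal to that lemma (where color handling is absorbed into the sub-kernel weights $\mathbf{W}^h$), and your upper-bound direction is an easy supplement that the paper leaves implicit.
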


The proof is provided in Appendix~\ref{appx:gd_wl}.

\section{Relationship with Previous Work}

\subsection{Beyond the Limitations of MPNNs}

Despite being widely used, MPNNs are known to exhibit certain limitations: (1) over-smoothing; (2) over-squashing and under-reaching; (3) expressive power limited to $1$-WL. 
By contrast, 
CKGConv inherently addresses these constraints. \newline
\textbf{\textit{Over-smoothing}}~\cite{li2018DeeperInsightsGraph, oono2020GraphNeuralNetworks} arises because most MPNNs apply a smoothing operator (a blurring kernels or low-pass filter).  
\mlh{CKGConv can generate sharpening kernels, and thus does not suffer from oversmoothing, as illustrated in a toy example in Appendix~\ref{sec:toy_oversmoothing}.}
\textbf{\textit{Over-squashing}}~\cite{alon2020BottleneckGraphNeural, topping2022UnderstandingOversquashingBottlenecks} is mainly due to the local message-passing within the one-hop neighborhood.
The kernels in CKGConv can have supports beyond one-hop neighborhoods.
Both the empirical performance on the Long-Range Graph Benchmark~\cite{dwivedi2022LongRangeGraph} shown in Table~\ref{tab:lrgb}
and the ablation study in Appendix~\ref{sec:ablation_khop} showcase the effect of expanding the supports and indicate the necessity to go beyond local message-passing.
\newline
Regarding the \textbf{\textit{expressiveness}}~\cite{xu2019HowPowerfulAre, morris2019WeisfeilerLemanGo, loukas2020WhatGraphNeural}, we have demonstrated in Sec.~\ref{sec:expressiveness} that CKGConv can reach expressive power equivalent to GD-WL, thus going beyond $1$-WL algorithms.
The empirical experiments also validate the capacity of CKGConv.
\color{black}

\subsection{Equivariant Set Neural Networks}
\label{sec:eq_set_nn}

In general, graphs can be viewed as a set of nodes with observed structures among them.
Here, we demonstrate that 
when the pseudo-coordinates do not encode any graph structure,
CKGConv can naturally degenerate to the general form of a layer in an \textit{Equivariant Set Network}~\cite{segol2020UniversalEquivariantSet}.
This matches the natural transition between graph data and set data. The following proposition states that when we use 1-RRWP (i.e., an Identity matrix) as the pseudo-coordinate (and thus ignore any graph structure),
CKGConv is equivalent to a layer of an equivariant set network. 

\begin{proposition}\label{prop:ckgconv_deepset}
With $1$-RRWP $\P=[\mathbf{I}]$, 
CKGConv with a globally supported kernel can degenerate to the following general form of a layer in an \textit{Equivariant Set Network} (Eq.~(8) in ~\citet{segol2020UniversalEquivariantSet}):
\begin{equation}
\begin{aligned}
  (\chi \star \psi)(i)   =&  \gamma \cdot \chi(i) + \beta \cdot \big(\frac{1}{|\cV|} \sum_{j \in \cV} \chi(j)\big) + b\,.
\end{aligned}
\end{equation}
Here $\gamma, \beta, b \in \RR$ are learnable parameters.
This can be directly generalized to vector-valued signals.
\end{proposition}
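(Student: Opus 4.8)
The plan is to substitute the two structural hypotheses of the proposition — the $1$-RRWP choice $\P=[\mathbf{I}]$ and the use of a globally supported kernel — directly into the scaled convolution of Eq.~\eqref{eq:ck-gconv} and simplify. Under $\P=[\mathbf{I}]$ the relative positional encoding collapses to a single scalar, the Kronecker delta $\P_{i,j}=\mathbf{I}_{i,j}=\delta_{ij}$, so the kernel $\psi:\RR\to\RR$ is only ever evaluated at the two inputs $1$ (when $j=i$) and $0$ (when $j\neq i$). With global support we have $\operatorname{supp}_\psi(i)=\cV$, hence $|\operatorname{supp}_\psi(i)|=|\cV|=N$ for every node $i$, which fixes the normalizing constant.

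First I would set $a:=\psi(1)$ and $c:=\psi(0)$, two fixed reals determined by the kernel, and split the aggregation into the self term $j=i$ and the remaining terms $j\neq i$, obtaining
\[ (\chi\star\psi)(i)=\frac{1}{N}\Big(a\,\chi(i)+c\sum_{j\neq i}\chi(j)\Big)+b. \]
Then I would substitute $\sum_{j\neq i}\chi(j)=\sum_{j\in\cV}\chi(j)-\chi(i)$ and regroup, isolating a term proportional to $\chi(i)$ and a term proportional to the set mean $\frac{1}{|\cV|}\sum_{j\in\cV}\chi(j)$. Reading off the coefficients identifies $\gamma=(a-c)/N=(\psi(1)-\psi(0))/N$ and $\beta=c=\psi(0)$, while $b$ is already the free bias appearing in the target expression, which completes the matching to the general equivariant-set layer.

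The one point requiring care — and the step I would treat as the crux — is arguing that $\gamma$ and $\beta$ can take \emph{arbitrary} real values, so the degenerate layer realizes the full equivariant-set layer rather than a restricted subfamily. Since $\gamma$ and $\beta$ depend only on the two independent quantities $a-c$ and $c$, it suffices to show the MLP kernel can independently fix its outputs at the two distinct inputs $0$ and $1$; this is an elementary two-point interpolation well within the expressive capacity of an MLP, so every pair $(\gamma,\beta)\in\RR^2$ is attainable. Finally, because the vector-valued depthwise form in Eq.~\eqref{eq:ck-gconv-dw} applies the kernel channelwise through $\odot$ before the shared linear map $\mathbf{W}$, the scalar argument carries over coordinatewise, yielding the stated direct generalization to vector-valued signals.
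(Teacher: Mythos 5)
Your algebraic decomposition---splitting the global sum into the self term $j=i$ and the remainder, then regrouping into an identity part and a mean part---is exactly the manipulation in the paper's proof. The genuine gap is in the step you yourself flag as the crux. You evaluate the kernel at the raw coordinates $\P_{i,j}=\delta_{ij}$, obtain $\gamma=(\psi(1)-\psi(0))/N$ and $\beta=\psi(0)$, and argue that two-point interpolation at the inputs $0$ and $1$ makes $(\gamma,\beta)$ arbitrary. That argument is only valid for a single fixed graph order $N$: since the kernel sees the same two inputs $0$ and $1$ regardless of the set size, a \emph{fixed} kernel pins the identity coefficient to $(\psi(1)-\psi(0))/N$, which varies with (and decays in) $N$. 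An equivariant set layer in the sense of \citet{segol2020UniversalEquivariantSet} has parameters $\gamma,\beta,b$ shared across sets of all sizes, so under the intended reading---one set of learnable parameters realizing the same set layer uniformly in $N$---your construction cannot match any layer with $\gamma\neq 0$ across varying sizes, and the claimed ``degeneration'' fails.

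The missing ingredient is the RRWP re-scaling of Appendix~\ref{appx:re-normalization}, $\P_{i,j}\leftarrow N\cdot\P_{i,j}$, which the paper's proof invokes explicitly: with it, $P_{i,i}=|\cV|$ and $P_{i,j}=0$ for $i\neq j$, so the diagonal input to the kernel carries the set size. Taking a \emph{linear} kernel $\psi(x)=\gamma x+\beta$ then gives
\begin{equation*}
(\chi\star\psi)(i)=\tfrac{1}{|\cV|}\bigl((|\cV|\gamma+\beta)\,\chi(i)+\beta\textstyle\sum_{j\neq i}\chi(j)\bigr)+b=\gamma\,\chi(i)+\beta\cdot\bigl(\tfrac{1}{|\cV|}\textstyle\sum_{j\in\cV}\chi(j)\bigr)+b\,,
\end{equation*}
where the linearity of $\psi$ makes the $|\cV|$ in the input cancel the $1/|\cV|$ normalizer, yielding size-independent coefficients. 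This also shows your MLP two-point interpolation is heavier than needed: once the rescaled coordinates are used, a linear $\psi$ suffices (and is trivially representable by the MLP kernel). Your fix is minimal---rerun your computation with $\psi(|\cV|)$ and $\psi(0)$ in place of $\psi(1)$ and $\psi(0)$---and your closing remark on the channelwise extension to the depthwise vector-valued form is fine as stated.
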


The proof is provided in Appendix~\ref{appx:set_nn}.

\subsection{Polynomial Spectral and Diffusion Enhanced GNNs}
\label{sec:poly_spectral}

\renewcommand{\L}{\tilde{\mathbf{L}}}
\newcommand{\sA}{\tilde{\mathbf{A}}}
\newcommand{\y}{\mathbf{y}}

\emph{Polynomial spectral GNNs}
\mlh{approximate spectral convolution} by fixed-order polynomial functions of the symmetric normalized graph Laplacian matrix $\L =\mathbf{I} - \D^{-1/2}\A\D^{-1/2} \in \mathbb{R}^{n \times n}$.
\mlh{Similarly,} diverse \emph{Diffusion Enhanced GNNs}
use polynomial parameterization with the diffusion operator $\sA$ or $\rw$ replacing the graph Laplacian. 

The following proposition states that CKGConv can represent any polynomial spectral GNN or diffusion-enhanced GNN of any order \mlh{with suitable injections of the node degrees.}

\begin{proposition}\label{prop:ckgconv_spectral}
With $K$-RRWP $\P_{i, j} \in \RR^K$ as pseudo-coordinates, 
CKGConv with a linear kernel $\psi$ can represent any \emph{Polynomial Spectral GNN} or any \emph{Diffusion Enhanced GNNs} of $(K{-}1)$th order exactly, regardless of the specific polynomial parameterization,
if degree $d_i^{1/2}$ and $d_j^{-1/2}$ are injected to $\P_{i,j}$ properly.
\end{proposition}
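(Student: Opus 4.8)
The plan is to reduce every degree-$(K{-}1)$ polynomial spectral or diffusion GNN to the canonical monomial form $g(\mathbf{S})\chi = \sum_{k=0}^{K-1} c_k \mathbf{S}^k \chi$ acting on the node signal, where the graph shift operator $\mathbf{S}$ is one of $\rw$, $\sA$, or $\L$, and then to exhibit a linear kernel on the degree-injected RRWP that reproduces this map entry-wise. The clause ``regardless of the specific polynomial parameterization'' is handled at the outset: any basis of polynomials of degree $\le K{-}1$ (Chebyshev as in ChebNet, Bernstein as in BernNet, monomial as in GPRGNN, etc.) spans the same subspace $\operatorname{span}\{\mathbf{I}, \mathbf{S}, \dots, \mathbf{S}^{K-1}\}$, so it suffices to match an arbitrary choice of monomial coefficients $c_0, \dots, c_{K-1}$.

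The key algebraic observation is how each admissible shift operator relates to the powers of $\rw$ stored in the RRWP. By construction $(\rw^k)_{i,j} = \P_{i,j}[k]$, so powers of the random-walk matrix are available directly. For the symmetric normalized adjacency, $\sA = \D^{1/2}\rw\D^{-1/2}$ gives $\sA^k = \D^{1/2}\rw^k\D^{-1/2}$, hence
\begin{equation*}
(\sA^k)_{i,j} = d_i^{1/2}\,(\rw^k)_{i,j}\,d_j^{-1/2}\,,
\end{equation*}
which is exactly the degree pattern $d_i^{1/2}(\cdot)\,d_j^{-1/2}$ injected by $\hat{\P}_{i,j}$. Finally, since $\sA = \mathbf{I} - \L$ is a degree-preserving affine substitution, one has $\operatorname{span}\{\mathbf{I}, \L, \dots, \L^{K-1}\} = \operatorname{span}\{\mathbf{I}, \sA, \dots, \sA^{K-1}\}$, so any degree-$(K{-}1)$ polynomial in the normalized Laplacian equals a degree-$(K{-}1)$ polynomial in $\sA$; this reduces the Laplacian case to the $\sA$ case.

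With these identities in hand, I would plug the degree-injected coordinate into a linear kernel and match coefficients. Writing the $k$-th component of $\hat{\P}_{i,j}$ as $\theta_{1,k}(\rw^k)_{i,j} + \theta_{2,k}\theta_{3,k}(\sA^k)_{i,j}$ and taking a linear $\psi(\hat{\P}_{i,j}) = \sum_{k=0}^{K-1} w_k \hat{\P}_{i,j}[k]$, the globally supported CKGConv evaluates to
\begin{equation*}
(\chi \star \psi)(i) = \frac{1}{|\cV|}\sum_{j \in \cV} \chi(j)\sum_{k=0}^{K-1} w_k\big(\theta_{1,k}(\rw^k)_{i,j} + \theta_{2,k}\theta_{3,k}(\sA^k)_{i,j}\big) + b\,.
\end{equation*}
Selecting the degree parameters to activate the relevant operator (e.g.\ $\theta_{2,k}=0$ for $\rw$ or $\theta_{1,k}=0$ for $\sA$ and $\L$) and choosing $w_k$ to absorb the constant $1/|\cV|$ and equal the target coefficient $c_k$ yields exactly $(\sum_k c_k \mathbf{S}^k \chi)_i + b$; taking $b=0$ removes the spurious bias. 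The extension to vector-valued signals uses the depthwise-separable form of Eq.~\eqref{eq:ck-gconv-dw}: a linear $\boldsymbol{\psi}$ supplies independent polynomial coefficients per channel, while the shared matrix $\mathbf{W}$ performs the channel mixing of the target GNN.

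The step I expect to be the main obstacle is the operator-conversion bookkeeping --- verifying that the specific factors $d_i^{1/2}$ and $d_j^{-1/2}$ hard-wired into $\hat{\P}$ are precisely those needed to convert $\rw^k$ into $\sA^k$ (and, through the affine-span argument, into $\L^k$), and confirming that coefficient-matching reproduces the target map \emph{exactly} rather than approximately. One should also check that the admissible operators are closed under this reduction: the polynomial spectral and diffusion GNNs considered here are parameterized by $\rw$, $\sA$, or $\L$, each of which is covered, whereas a polynomial in the raw adjacency $\A$ would not be, since $(\A^k)_{i,j}$ depends on intermediate-node degrees and cannot be recovered from $(\rw^k)_{i,j}$ by endpoint scaling alone. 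I would therefore keep the statement aligned with the normalized operators used by the cited diffusion and spectral methods.
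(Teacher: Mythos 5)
Your proposal is correct and follows essentially the same route as the paper's proof: the same key identity $\tilde{\mathbf{A}}^k = \mathbf{D}^{1/2}\mathbf{M}^k\mathbf{D}^{-1/2}$ (the paper's Lemma~\ref{lemma:A^k}), the same reduction of arbitrary polynomial parameterizations to the monomial span (the paper via explicit binomial expansion of $(\mathbf{I}-\tilde{\mathbf{A}})^k$, you via the equivalent affine-substitution span argument), and the same coefficient matching with a linear kernel on the degree-injected coordinates, with your closing caveat about the raw adjacency $\mathbf{A}$ matching the paper's restriction to normalized operators. The one bookkeeping difference: you absorb the $1/|\mathcal{V}|$ normalization into the kernel weights $w_k$, which makes them graph-dependent and yields exactness only per fixed graph order, whereas the paper cancels this factor via the RRWP rescaling $\P_{i,j}\leftarrow N\cdot\P_{i,j}$ (Appendix~\ref{appx:re-normalization}) combined with homogeneity of the linear kernel, so a single fixed parameterization is exact across graphs of all sizes --- a one-line fix to your argument.
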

The proof is presented in Appendix~\ref{appx:poly_spectral}.
If $K \to \infty$, CKGConv can closely approximate a full spectral GNN. 
This also highlights the relationship between RRWP and graph spectral wavelets ~\cite{hammond2011WaveletsGraphsSpectral}.

Note that CKGConv is strictly \emph{more expressive} than previous polynomial spectral GNNs and diffusion-enhanced GNNs. 
\mlh{While polynomial spectral GNNs and diffusion-enhanced GNNs are constrained by the linear combinations of powers of Laplacian/diffusion operators,
CKGConv, equipped with non-linear kernels $\psi$ such as MLPs, can construct more general convolution kernels. Since MLPs are universal function approximators~\cite{hornik1989MultilayerFeedforwardNetworks}, CKGConv can represent a considerably richer class of functions.
The ablation study on the kernel functions (in Appendix~\ref{sec:ablation_kernel_func}) also verifies the importance of introducing kernel functions beyond linear transformations.}

\subsection{Fourier Features of Graphs}

As mentioned in Proposition~\ref{prop:ckgconv_spectral},
RRWP can be viewed as a set of bases for a polynomial vector space, which approximates the full Fourier basis of graphs (i.e., eigenvectors of the Laplacian).
Therefore, RRWP can be viewed as a set of (approximate) Fourier features under certain transformations.
Likewise in the Euclidean space, \citet{tancik2020FourierFeaturesLet} propose to construct \emph{Fourier features} from coordinates to let MLPs better capture high-frequency information.

Another existing approach broadly related to our work is the Specformer~\cite{bo2023SpecformerSpectralGraph}, which generates graph spectral filters via transformers, given a sampled collection of Fourier bases\footnote{Operating on the full Fourier bases has $O(N^3)$ computational complexity, where $N$ is the number of nodes in the graph.} in the spectral domain.
\mlh{Specformer approximates the full Fourier bases from the spectral perspective, whereas CKGConv performs an approximation in the spatial domain.}
In a similar fashion to the contrast between the Fourier transform and the wavelet decomposition, Specformer achieves better localization on frequencies, and CKGConv exhibits better localization spatially.

\subsection{Graph Transformers}

As shown in Sec.~\ref{sec:expressiveness}, with the same generalized distances (e.g., SPD, RD, RRWP) as relative positional encoding or pseudo-coordinates, 
CKGConv can reach the same theoretical expressive power as Graph Transformers,
with respect to graph isomorphism tests.

From a filtering perspective, self-attention in (Graph) Transformers can be viewed as a dynamic filter~\cite{park2021HowVisionTransformers}. However, the filter coefficients are constrained to be positive, and thus self-attention can only perform blurring or low-pass filtering.
In contrast, CKGConv is a non-dynamic filter, but has the flexibility to include positive and negative coefficients simultaneously and thus can generate sharpening kernels.

In this work, we do not claim that CKGConv is better than Graph Transformers, or vice versa.
We emphasize that each approach has its own advantages. 
The contrasting strengths of dynamic and sharpening,
present an intriguing possibility of developing architectures that combine the strengths of graph transformers and continuous convolution.
Exploratory experiments in Sec.~\ref{sec:study_ckgcn_gt_difference} highlight the behavioral differences between graph transformers and CKGCNs, and examine the performance of a preliminary, naive combination.

\section{Experimental Results}
\label{sec:experiments}

\newcommand{\first}[1]{\textcolor{SeaGreen}{\textbf{#1}}}
\newcommand{\second}[1]{\textcolor{BurntOrange}{\textbf{#1}}}
\newcommand{\third}[1]{\textcolor{Periwinkle}{\textbf{#1}}}

\begin{table*}[h!]
    \centering
    \caption{
    Test performance in five benchmarks from \cite{dwivedi2022BenchmarkingGraphNeural, ma2023GraphInductiveBiases, bo2023SpecformerSpectralGraph}. 
    Shown is the mean $\pm$ s.d. of 4 runs with different random seeds. Highlighted are the top \first{first}, \second{second}, and \third{third} results. 
    \# Param under $500K$ for ZINC, PATTERN, CLUSTER and $\sim 100K$ for MNIST and CIFAR10.}
    \vskip 0.15in
    
    \resizebox{0.96\textwidth}{!}{\footnotesize
    \begin{tabular}{lccccc}
    \toprule
       \textbf{Model}  &\textbf{ZINC} &\textbf{MNIST} &\textbf{CIFAR10} &\textbf{PATTERN} &\textbf{ CLUSTER} \\
       \cmidrule{2-6} 
       &\textbf{MAE}$\downarrow$  &\textbf{Accuracy}$\uparrow$ &\textbf{Accuracy}$\uparrow$ &\textbf{W. Accuracy}$\uparrow$ &\textbf{W. Accuracy}$\uparrow$ \\
       \midrule
       GCN  &$0.367\pm0.011$ &$90.705\pm0.218$ &$55.710\pm0.381$ &$71.892\pm0.334$ &$68.498\pm0.976$ \\
GIN  &$0.526\pm0.051$ &$96.485\pm0.252$ &$55.255\pm1.527$ &$85.387\pm0.136$ &$64.716\pm1.553$ \\
GAT &$0.384\pm0.007$ &$95.535\pm0.205$ &$64.223\pm0.455$ &$78.271\pm0.186$ &$70.587\pm0.447$ \\
GatedGCN &$0.282\pm0.015$ &$97.340\pm0.143$ &$67.312\pm0.311$ &$85.568\pm0.088$ &$73.840\pm0.326$ \\
GatedGCN-LSPE &$0.090\pm0.001$ &$-$ &$-$ &$-$ &$-$ \\
PNA &$0.188\pm0.004$ &$97.94\pm0.12$ &$70.35\pm0.63$ &$-$ &$-$ \\
GSN  &$0.101\pm0.010$ &$-$ &$-$ &$-$ &$-$ \\
\midrule
DGN &$0.168\pm0.003$ &$-$ &\second{${72.838\pm0.417}$} &$86.680\pm0.034$ &$-$ \\
Specformer &\second{$\mathbf{0.066\pm0.003}$} &- &- &- &- \\
\midrule CIN &{${0.079}\pm{0.006}$} &$-$ &$-$ &$-$ &$-$ \\
CRaW1 &$0.085\pm0.004$ &${97.944}\pm{0.050}$ &$69.013\pm0.259$ &$-$ &$-$ \\
GIN-AK+ &${0 . 0 8 0}\pm{0 . 0 0 1}$ &$-$ &$72.19\pm0.13$ &\third{$\mathbf{86.850\pm0.057}$} &$-$ \\
\midrule SAN &$0.139\pm0.006$ &$-$ &$-$ &$86.581\pm0.037$ &$76.691\pm0.65$ \\
Graphormer &$0.122\pm0.006$ &$-$ &$-$ &$-$ &$-$ \\
K-Subgraph SAT &$0.094\pm0.008$ &$-$ &$-$ &{${86.848\pm0.037}$} &$77.856\pm0.104$ \\
EGT &$0.108\pm0.009$ &\second{$\mathbf{98.173\pm0.087}$} &$68.702\pm0.409$ &$86.821\pm0.020$ &\second{$\mathbf{79.232\pm0.348}$} \\
Graphormer-URPE &$0.086\pm0.007$ &$-$ &$-$ &$-$ &$-$  \\
Graphormer-GD &$0.081\pm0.009$ &$-$ &$-$ &$-$ &$-$ \\
\midrule
 GPS &\third{$\mathbf{0.070}\pm\mathbf{0.004}$} &{${98.051\pm0.126}$} &{${72.298\pm0.356}$} &$86.685\pm0.059$ &{${78.016\pm0.180}$} \\
GRIT &\first{$\mathbf{0.059\pm0.002}$}&\third{$\mathbf{98.108\pm0.111}$} &\first{$\mathbf{76.468\pm0.881}$} &\second{$\mathbf{87.196\pm0.076}$} &\first{$\mathbf{80.026\pm0.277}$} \\
\midrule
CKGCN &\first{$\mathbf{0.059\pm0.003}$}       &\first{$\mathbf{98.423\pm0.155}$}         &\third{$\mathbf{72.785\pm0.436}$}        &\first{$\mathbf{88.661\pm0.143}$}  &\third{$\mathbf{79.003\pm0.140}$}       
\\
\bottomrule
\end{tabular}
}
\label{tab:exp_main}
\vskip -0.1in
\end{table*}


\subsection{Benchmarking CKGCN} 
We evaluate our proposed method on five datasets from {\em Benchmarking GNNs}~\cite{dwivedi2022BenchmarkingGraphNeural} and another two datasets from {\em Long-Range Graph Benchmark}~\cite{dwivedi2022LongRangeGraph}. 
These benchmarks include diverse node- and graph-level learning tasks such as node classification, graph classification, and graph regression. They test an algorithm's ability to focus on graph structure encoding, to perform node clustering, and to learn long-range dependencies. 
The statistics of these datasets and further details of the experimental setup are deferred to Appendix~\ref{appendix:experiment_details}.

\textbf{Baselines}
We compare our methods with
\begin{itemize}[leftmargin=*,noitemsep,topsep=0pt]
\item \emph{SOTA Graph Transformer:} GRIT~\cite{ma2023GraphInductiveBiases}; 
\item \emph{Hybrid Graph Transformer (MPNN+self-attention):} GraphGPS~\cite{rampasek2022RecipeGeneralPowerful};
\item \emph{Popular Message-passing Neural Networks:} GCN~\cite{kipf2017SemiSupervisedClassificationGraph}, GIN~~\cite{xu2019HowPowerfulAre} and its variant with edge-features~\cite{hu2020StrategiesPretrainingGraph}, GAT~\cite{velickovic2018GraphAttentionNetworks}, GatedGCN~\cite{bresson2018ResidualGatedGraph}, GatedGCN-LSPE~\cite{dwivedi2022GraphNeuralNetworks}, and
PNA~\cite{corso2020PrincipalNeighbourhoodAggregation}; 
\item \emph{Other Graph Transformers:} Graphormer~\cite{ying2021TransformersReallyPerform}, 
K-Subgraph SAT~\cite{chen2022StructureAwareTransformerGraph}, EGT~\cite{hussain2022GlobalSelfAttentionReplacement}, SAN~\cite{kreuzer2021RethinkingGraphTransformers}, 
Graphormer-URPE~\cite{luo2022YourTransformerMay}, and
Graphormer-GD~\cite{zhang2023RethinkingExpressivePower};
\item \emph{SOTA Spectral Graph Neural Networks:} Specformer~\cite{bo2023SpecformerSpectralGraph} and
DGN~\cite{beani2021DirectionalGraphNetworks}; and
\item \emph{Other SOTA Graph Neural Networks:} 
GSN~\cite{bouritsas2022ImprovingGraphNeural},
CIN~\cite{bodnar2021WeisfeilerLehmanGo},
CRaW1~\cite{tonshoff2023WalkingOutWeisfeiler}, and
GIN-AK+~\cite{zhao2022StarsSubgraphsUplifting}.
\end{itemize}

\textbf{Benchmarks from Benchmarking GNNs} In Table~\ref{tab:exp_main}, we report the results on five datasets from~\cite{dwivedi2022BenchmarkingGraphNeural}: ZINC, MNIST, CIFAR10, PATTERN, and CLUSTER. 
We observe that the proposed CKGConv achieves the best performance for 3 out of 5 datasets and is ranked within the three top-performing models for the other 2 datasets.
Compared to the hybrid transformer, \emph{GraphGPS}, consisting of an MPNN and self-attention modules (SAs), CKGCN outperforms on all five datasets, indicating the advantage of the continuous convolution employed by CKGConv over MPNNs, even when enhanced by self-attention.
\emph{GRIT} and CKGConv achieve comparable performance  but exhibit advantage in different datasets. CKGConv outperforms on MNIST and PATTERN while GRIT performs better on CIFAR10 and CLUSTER.
This suggests that the capability to learn \textit{dynamic} kernels and \textit{sharpening} kernels might have different impact and value on the empirical performance depending on the nature of the dataset.
Notably, CKGConv exhibits superior performance compared to all other convolution-based GNNs for four of the five datasets. The only exception is CIFAR10, where CKGConv is slightly worse than DGN,  although the difference is not statistically significant.
\footnote{According to a two-sided t-test at the 5\% significance level.}

\textbf{Long-Range Graph Benchmark (LRGB)} Graph transformers demonstrate advantages over MPNNs in modeling long-range dependencies.
Here, we verify the capacity of CKGConv to model long-range dependencies.
We conduct experiments on two peptide graph datasets from the Long-Range Graph Benchmark (LRGB)~\cite{dwivedi2022LongRangeGraph}.
The obtained results are summarized in Table~\ref{tab:lrgb}.
On both datasets, CKGConv obtains the second-best mean performance.
Based on a two-sample one-tailed t-test, the performance is not significantly different from the best-performing algorithm (GRIT). There is, however, a statistically significant difference between CKGConv's performance and the third-best algorithm's performance for both datasets.
This demonstrates that our model is able to learn long-range interactions, on par with the SOTA graph transformers.

\begin{table}[tb!]
    \caption{Test performance on two benchmarks from long-range graph benchmarks (LRGB)~\cite{dwivedi2022LongRangeGraph}. 
    Shown is the mean $\pm$ s.d. of 4 runs with different random seeds. Highlighted are the top \first{first}, \second{second}, and \third{third} results.
    \#~Param $\sim 500K$.
    }
    \vskip 0.15in
    \setlength{\tabcolsep}{2pt}
    \centering
    \resizebox{0.48\textwidth}{!}{\footnotesize
    \begin{tabular}{lcc}
    \toprule
       \textbf{Model}  &\textbf{Peptides-func} &\textbf{Peptides-struct} \\
       \cmidrule{2-3} 
       &\textbf{AP}$\uparrow$ &\textbf{MAE}$\downarrow$ \\
       \midrule
       GCN  &$0.5930\pm0.0023$  &$0.3496\pm0.0013$ \\
GINE &$0.5498\pm0.0079$ &$0.3547\pm0.0045$ \\
GatedGCN &$0.5864\pm0.0035$ &$0.3420\pm0.0013$ \\
GatedGCN+RWSE &$0.6069\pm0.0035$ &$0.3357\pm0.0006$ \\
\midrule 
Transformer+LapPE &$0.6326\pm0.0126$ &$0.2529\pm0.0016$ \\
SAN+LapPE &$0.6384\pm0.0121$  &$0.2683\pm0.0043$ \\
SAN+RWSE&{${0.6439\pm0.0075}$}  &$0.2545\pm0.0012$ \\
\midrule
GPS &\third{$\mathbf{0.6535\pm0.0041}$} &\third{$\mathbf{0.2500\pm0.0012}$}\\
GRIT &\first{$\mathbf{0.6988\pm0.0082}$} &\first{$\mathbf{0.2460\pm0.0012}$}
\\
\midrule
CKGCN &\second{$\mathbf{0.6952\pm0.0068}$} &\second{$\mathbf{0.2477\pm0.0018}$} \\
       \bottomrule
    \end{tabular}
    }
    \label{tab:lrgb}
    \vskip -0.in
\end{table}


\subsection{The Flexible Kernels of CKGConv}

\mlh{
Convolutional kernels with both negative and positive coefficients have a long history.
Such kernels are widely used to amplify the signal differences among data points, e.g., signal sharpening and edge detection in image processing.
Here, we highlight that CKGConv has the flexibility to generate kernels that include 
negative and positive coefficients.
}

\subsubsection{CKGConv Kernel Visualization}

We show that CKGConv can learn positive and negative kernel coefficients from the data, without being forced to generate negative kernel coefficients. 
Therefore, we visualize the learned kernels of CKGConv from real-world graph learning tasks.
Specifically, we visualize two selected learned kernels from the depthwise convolution of CKGConv for each of the two graphs from the ZINC datasets, as shown in Fig.~\ref{fig:kernel}.
\mlh{Several learned kernels in CKGConv indeed generate both positive and negative coefficients.}

\begin{figure}[htb!]
    \centering
    \begin{minipage}{0.6\textwidth}
    \includegraphics[height=2.5cm]{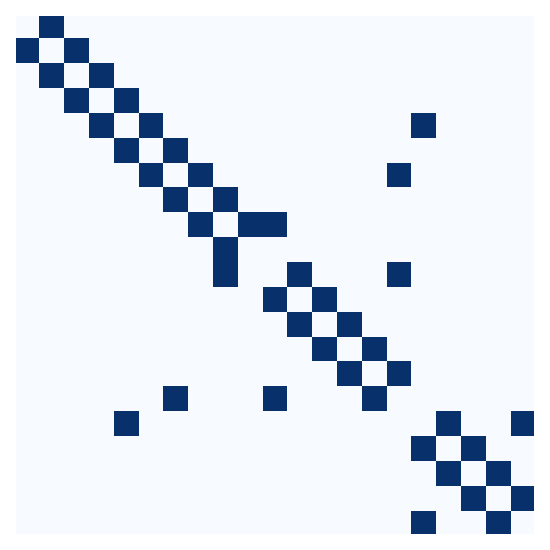}
    \includegraphics[height=2.5cm]{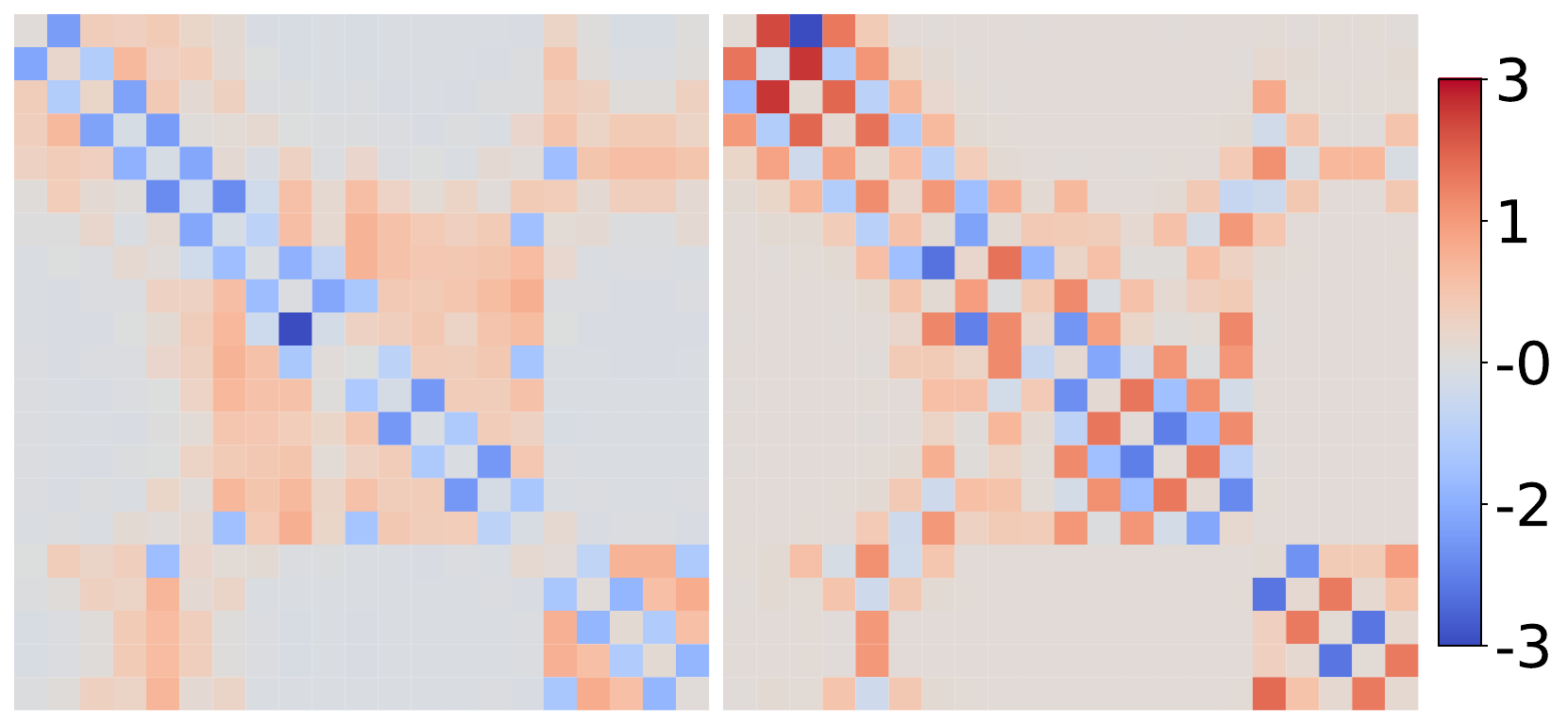} 
    \end{minipage} \hfill
    \begin{minipage}{0.6\textwidth}
    \includegraphics[height=2.5cm]{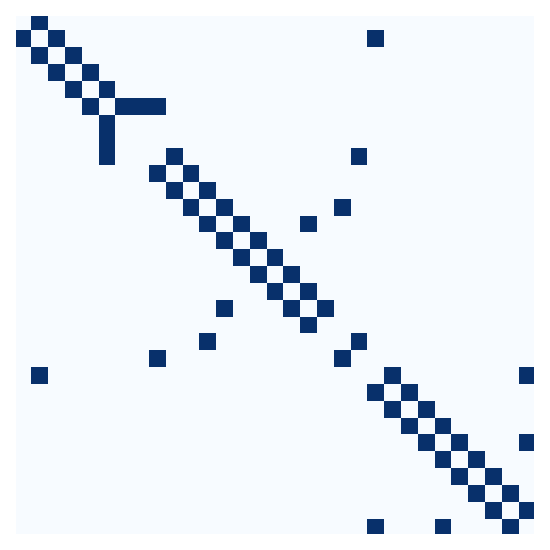}  
    \includegraphics[height=2.5cm]{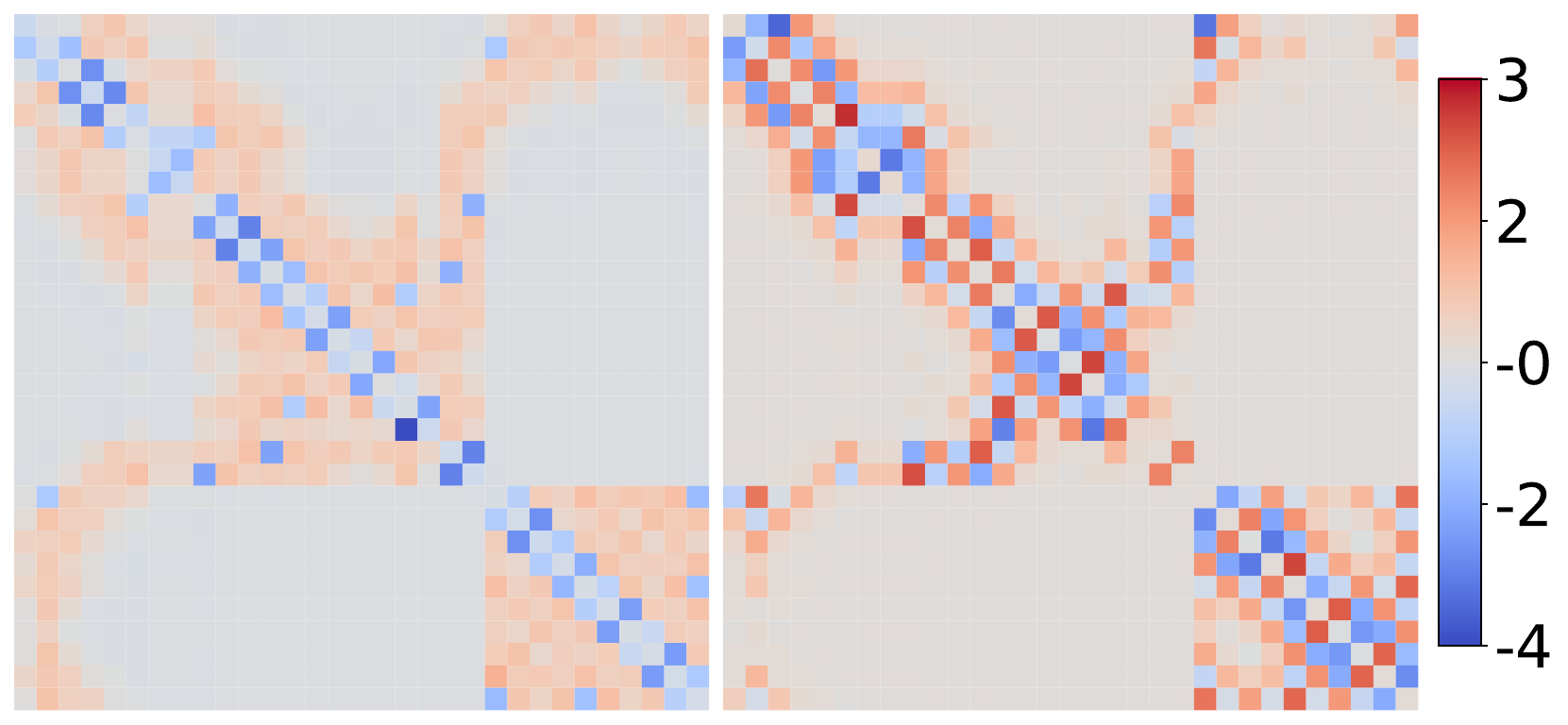}
    \end{minipage}
    \caption{Adjacency matrices and learned continuous kernels across multiple channels for two graphs from the ZINC dataset.}
    \label{fig:kernel}
    \vskip -0.1in
\end{figure}

\subsubsection{Ablation Study on Flexible Kernels}
\label{sec:ablation_flexible_kernel}

\mlh{
To showcase the importance of the flexible kernels in CKGConv on graph learning tasks, we conduct an ablation study on ZINC and PATTERN,
comparing CKGCN to its blurring-kernel variant,
which is constrained to generate all-positive coefficients by incorporating a Softmax operation.
}

\begin{table}[t!]
    \caption{Effect of different kernel types on ZINC and PATTERN.}
    \setlength{\tabcolsep}{2pt}
    \vskip 0.15in
    \centering
    \resizebox{0.48\textwidth}{!}{\footnotesize
    \begin{tabular}{l|cc|cc}
    \toprule
       \textbf{Model}  &\textbf{ZINC} &\textbf{PATTERN} &\multirow{2}{*}{\emph{Dynamic}} &\multirow{2}{*}{\emph{Flexible}} \\
       \cmidrule{2-3} 
       &\textbf{MAE}$\downarrow$ &\textbf{W. Acc.}$\uparrow$ \\
       \midrule
       GRIT  & \textbf{$0.059\pm0.002$}  &$87.196\pm0.076$ &\cmark &\xmark \\
       CKGCN   & \textbf{$0.059\pm0.003$} & \textbf{$88.661\pm0.143$} &\xmark &\cmark \\
       \hfill -Blurring &$0.073\pm0.003$ &$87.000\pm0.002$ &\xmark &\xmark \\
       \bottomrule
    \end{tabular}
    }
    \label{tab:kernel}
    \vskip -0.15in
\end{table}
\mlh{
From Table~\ref{tab:kernel}, 
constraining CKGCN to generate blurring kernels leads to remarkable performance deterioration.
In addition, GRIT with self-attention (SA) mechanisms, which can be viewed as {\em dynamic} blurring kernels~\cite{park2021HowVisionTransformers}, outperforms CKGCN-Blurring.
This observation indicates that both dynamic and flexible properties are beneficial to graph learning and hints at the potential combination of CKGConvs and SAs for graph learning.\footnote{Note that attention mechanisms typically require the incorporation of Softmax to stabilize the attention scores.}
}

\subsection{Sensitivity Study on the Choice of Graph PEs}
\label{appx:pe_study}

\mlh{
In this paper, we demonstrate the efficacy of CKGConv with RRWP to avoid PE becoming the bottleneck of the model performance.
However, CKGConv is not constrained to working with a specific graph PE. 
As depicted in Proposition~\ref{appx:gd_wl}, the choices of PE affect the expressive power of CKGConv,
depending on the structural/positional information encoded.
Therefore, in this section, we study the impact of using different graph PEs,
and demonstrate that CKConv can reach a competitive performance with other well-designed and expressive PEs besides RRWP.
}


We conduct the sensitivity study on ZINC datasets with four typical graph PEs: RRWP~\cite{ma2023GraphInductiveBiases}, 
Resistance Distance (RD)~\cite{zhang2023RethinkingExpressivePower}, 
Shortest-path distance (SPD)~\cite{ying2021TransformersReallyPerform}, 
and ``Pair-RWSE", which is constructed as relative PE by concatenating the Random Walk Structural Encoding (RWSE)~\cite{dwivedi2022GraphNeuralNetworks} for each node-pair.
\mlh{We add RWSE as the absolute PE to the node attribute when using other PEs, mimicking RRWP.}
The experimental setup follows the main experiment and the results of 4 runs are reported in Table ~\ref{tab:sensitivity_PE}.

\begin{table}[!h]
    \vskip -0.15in
    \centering
    \caption{The sensitive study of CKGCN on the choices of graph PEs. Shown is the mean $\pm$ s.d. of 4 runs.
    }
    \label{tab:sensitivity_PE}
    \vskip 0.15in
        \resizebox{0.45\textwidth}{!}{
    \begin{tabular}{l|llll}
    \toprule
        CKGCN & RRWP & RD & SPD & Pair-RWSE \\ \midrule
        \multirow{2}{*}{MAE $\downarrow$} & 0.059 & 0.062  & 0.072  & 0.081  \\
        & $\pm$ 0.003  & $\pm$ 0.004 & $\pm$ 0.003 & $\pm$ 0.002
        \\ \bottomrule
    \end{tabular}}
    \vskip -0.05in
\end{table}

The results of SPD and Pair-RWSE show that a sub-optimal PE design leads to worse performance of CKGCN.
However, with an expressive PE, 
CKGCN demonstrates stable performance: CKGCN with either RD or RRWP achieves competitive performance that is statistically indistinguishable from the state-of-the-art.\footnote{According to a two-sided t-test at the 5\% significance level.}

\subsection{CKGCN and GTs Behave Differently}
\label{sec:ensmb}
\label{sec:study_ckgcn_gt_difference}




\mlh{
Motivated by the observation in Sec.~\ref{sec:ablation_flexible_kernel} and previous work on ViT~\cite{park2021HowVisionTransformers},
in this section, we aim to demonstrate that CKGCNs and graph transformers learn complementary features in graph learning.
}
Thus, we conduct an ensembling experiment on ZINC to examine the effects of naively combining a SOTA graph transformer GRIT with CKGCN.

In Table~\ref{tab:ensmb}, we report the mean and standard deviation of MAE (employing bootstrapping) of an ensemble of GRIT models, an ensemble of CKGConv models, and a mixed ensemble using both of these models.
In each case, the total ensemble size is 4 (two of each model for the mixed ensemble). 
We observe that constituting the ensemble using both CKGConv and SAs offers a statistically significant advantage compared to 
either homogeneous ensemble. 
\begin{table}[h!]
    \vskip -0.15in
    \caption{Effect of ensembling on ZINC.}
    \label{tab:ensmb}
    \setlength{\tabcolsep}{2pt}
    \vskip 0.15in
    \centering
    {\footnotesize
    \begin{tabular}{l|ccc}
    \toprule
       Model
       &\textbf{GRIT-Ens.} &\textbf{CKGConv-Ens.} &\textbf{Mixed-Ens.} \\ 
       \midrule
       MAE $\downarrow$   &0.054$\pm$0.001 &0.054$\pm$0.002  &\textbf{0.051$\pm$0.001}* \\
       \bottomrule
    \end{tabular}
    }
        \vskip -0.05in
\end{table}

Based on the observation, we hypothesize that both CKGConvs and SAs have their own merits,
and it can be further advantageous to suitably combine them in the model architecture. 
Similar efforts have been undertaken in computer vision~\cite{park2021HowVisionTransformers, xiao2021EarlyConvolutionsHelp}.

\subsection{Further Analyses: Anti-Oversmoothing, Edge-detection, Support Sizes and Kernel Functions}

\mlh{
We include the results of additional experiments to further analyse the performance and behavior of CKGConv in Appendix~\ref{appx:additional_exp}. Specifically, we include:
\begin{itemize}[leftmargin=*,noitemsep,topsep=0pt]
    \item Two toy examples that demonstrate the advantages of (positive and) negative kernel coefficients.
    \begin{itemize}[leftmargin=*,noitemsep,topsep=0pt]
    \item Appendix~\ref{sec:toy_oversmoothing} showcases that CKGConv effectively counters \emph{oversmoothing}.
    \item Appendix~\ref{sec:toy_ridge_detect} demonstrates the efficacy of CKGConv for \emph{edge-detection}.\footnote{\emph{Edge-detection} refers to detecting signal discontinuities in signal processing.} 
\end{itemize}
    \item Two ablation/sensitivity studies on the kernel designs.
\begin{itemize}[leftmargin=*,noitemsep,topsep=0pt]
    \item Appendix~\ref{sec:ablation_khop} studies the impact of the \emph{support size}, which demonstrates the utility of localized kernels and highlights the importance of going \emph{beyond local message-passing}.
    \item Appendix~\ref{sec:ablation_kernel_func} analyzes the impact from the \emph{number of MLP blocks} in the kernel functions, which indicates the necessity of \emph{non-linear kernel functions}.
\end{itemize}
\end{itemize}
}


\section{Limitations}
On the computational side, 
a naive implementation of CKGConv with global support has $O(|\mathcal{V}|^2)$ complexity, the same as graph transformers.
A more efficient alternative implementation is provided in Appendix~\ref{appx:eff_impl}, which might prevent the usage of some operators such as BatchNorm.
The localized CKGConv can benefit from lower computation complexity but with weaker theoretical expressiveness.

\section{Conclusion}

Motivated by the lack of a flexible and powerful convolution mechanism for graphs,
we propose a general graph convolution framework, CKGConv, by generalizing continuous kernels to graph domains. These can recover most non-dynamic convolutional GNNs, from spatial to (polynomial) spectral.
Addressing the fundamentally different characteristics of graph domains from Euclidean domains,
we propose three theoretically and empirically motivated design innovations to accomplish the generalization to graphs.
Theoretically, we demonstrate that CKGConv possesses equivalent expressive power to SOTA Graph Transformers in terms of distinguishing non-isomorphic graphs via the GD-WL  test~\cite{zhang2023RethinkingExpressivePower}. We also provide 
theoretical connections to previous convolutional GNNs.
Empirically, the proposed CKGConv architecture either surpasses or achieves performance comparable to the SOTA across a wide range of graph datasets. It outperforms all other convolutional GNNs and achieves performance comparable to SOTA Graph Transformers. 
A further exploratory experiment suggests that CKGConv can learn non-dynamic sharpening kernels and extracts information complementary to that learned by the self-attention modules of Graph Transformers. This motivates a potential novel avenue of combining CKGConv and SAs in a single architecture.
\mlh{
Furthermore, the success of CKGConv motivates the generalization of continuous kernel convolutions to other non-Euclidean geometric spaces based on pseudo-coordinate designs.
}

\clearpage

\section*{Impact Statement}

This paper presents work whose goal is to advance the field of Geometric/Graph Deep Learning.
There are many potential societal consequences of our work, none which we feel must be specifically highlighted here.

\section*{Acknowledgment}
LM is supported by the Natural Sciences and Engineering Research Council of Canada (NSERC). \\
We acknowledge
the support of the Natural Sciences and Engineering Research Council of Canada (NSERC) [funding reference number 260250]
 and of the Fonds de recherche du Qu\'{e}bec.
\\
Cette recherche a \'{e}t\'{e} financ\'{e}e par le Conseil de recherches en sciences naturelles et en g\'{e}nie du Canada (CRSNG), [num\'{e}ro de r\'{e}f\'{e}rence 260250] et par les Fonds de recherche du Qu\'{e}bec.



\bibliographystyle{icml2024}
\bibliography{ref}

\begin{thebibliography}{101}
\providecommand{\natexlab}[1]{#1}
\providecommand{\url}[1]{\texttt{#1}}
\expandafter\ifx\csname urlstyle\endcsname\relax
  \providecommand{\doi}[1]{doi: #1}\else
  \providecommand{\doi}{doi: \begingroup \urlstyle{rm}\Url}\fi

\bibitem[Alon \& Yahav(2020)Alon and Yahav]{alon2020BottleneckGraphNeural}
Alon, U. and Yahav, E.
\newblock On the {{Bottleneck}} of {{Graph Neural Networks}} and its {{Practical Implications}}.
\newblock In \emph{Proc. {{Int}}. {{Conf}}. {{Learn}}. {{Represent}}.}, 2020.

\bibitem[{Arnaiz-Rodr{\'i}guez} et~al.(2022){Arnaiz-Rodr{\'i}guez}, Begga, Escolano, and Oliver]{arnaiz-rodriguez2022DiffWireInductiveGraph}
{Arnaiz-Rodr{\'i}guez}, A., Begga, A., Escolano, F., and Oliver, N.~M.
\newblock {{DiffWire}}: {{Inductive Graph Rewiring}} via the {{Lov{\'a}sz Bound}}.
\newblock In \emph{Proc. {{Learn}}. {{Graphs Conf}}.}, 2022.

\bibitem[Ba et~al.(2016)Ba, Kiros, and Hinton]{ba2016LayerNormalization}
Ba, J.~L., Kiros, J.~R., and Hinton, G.~E.
\newblock Layer {{Normalization}}.
\newblock In \emph{Adv. {{Neural Inf}}. {{Process}}. {{Syst}}. {{Deep Learn}}. {{Symp}}.}, 2016.

\bibitem[Beani et~al.(2021)Beani, Passaro, L{\'e}tourneau, Hamilton, Corso, and Li{\'o}]{beani2021DirectionalGraphNetworks}
Beani, D., Passaro, S., L{\'e}tourneau, V., Hamilton, W., Corso, G., and Li{\'o}, P.
\newblock Directional {{Graph Networks}}.
\newblock In \emph{Proc. {{Int}}. {{Conf}}. {{Mach}}. {{Learn}}.}, 2021.

\bibitem[Black et~al.(2024)Black, Wan, Mishne, Nayyeri, and Wang]{black2024ComparingGraphTransformers}
Black, M., Wan, Z., Mishne, G., Nayyeri, A., and Wang, Y.
\newblock Comparing {{Graph Transformers}} via {{Positional Encodings}}.
\newblock \emph{arXiv:2402.14202}, 2024.

\bibitem[Bo et~al.(2023)Bo, Shi, Wang, and Liao]{bo2023SpecformerSpectralGraph}
Bo, D., Shi, C., Wang, L., and Liao, R.
\newblock Specformer: {{Spectral Graph Neural Networks Meet Transformers}}.
\newblock In \emph{Proc. {{Int}}. {{Conf}}. {{Learn}}. {{Represent}}.}, 2023.

\bibitem[Bodnar et~al.(2021)Bodnar, Frasca, Wang, Otter, Montufar, Li{\'o}, and Bronstein]{bodnar2021WeisfeilerLehmanGo}
Bodnar, C., Frasca, F., Wang, Y., Otter, N., Montufar, G.~F., Li{\'o}, P., and Bronstein, M.
\newblock Weisfeiler and {{Lehman Go Topological}}: {{Message Passing Simplicial Networks}}.
\newblock In \emph{Proc. {{Int}}. {{Conf}}. {{Mach}}. {{Learn}}.}, 2021.

\bibitem[Bouritsas et~al.(2022)Bouritsas, Frasca, Zafeiriou, and Bronstein]{bouritsas2022ImprovingGraphNeural}
Bouritsas, G., Frasca, F., Zafeiriou, S.~P., and Bronstein, M.
\newblock Improving {{Graph Neural Network Expressivity}} via {{Subgraph Isomorphism Counting}}.
\newblock \emph{IEEE Trans. Pattern Anal. Mach. Intell.}, 2022.

\bibitem[Bresson \& Laurent(2018)Bresson and Laurent]{bresson2018ResidualGatedGraph}
Bresson, X. and Laurent, T.
\newblock Residual {{Gated Graph ConvNets}}.
\newblock \emph{arXiv:1711.07553}, 2018.

\bibitem[Bruna et~al.(2014)Bruna, Zaremba, Szlam, and LeCun]{bruna2014SpectralNetworksLocally}
Bruna, J., Zaremba, W., Szlam, A., and LeCun, Y.
\newblock Spectral {{Networks}} and {{Locally Connected Networks}} on {{Graphs}}.
\newblock In \emph{Proc. {{Int}}. {{Conf}}. {{Learn}}. {{Represent}}.}, 2014.

\bibitem[Chamberlain et~al.(2021)Chamberlain, Rowbottom, Gorinova, Bronstein, Webb, and Rossi]{chamberlain2021GRANDGraphNeural}
Chamberlain, B., Rowbottom, J., Gorinova, M.~I., Bronstein, M., Webb, S., and Rossi, E.
\newblock {{GRAND}}: {{Graph Neural Diffusion}}.
\newblock In \emph{Proc. {{Int}}. {{Conf}}. {{Mach}}. {{Learn}}.}, 2021.

\bibitem[Chen et~al.(2022)Chen, O'Bray, and Borgwardt]{chen2022StructureAwareTransformerGraph}
Chen, D., O'Bray, L., and Borgwardt, K.
\newblock Structure-{{Aware Transformer}} for {{Graph Representation Learning}}.
\newblock In \emph{Proc. {{Int}}. {{Conf}}. {{Mach}}. {{Learn}}.}, 2022.

\bibitem[Chien et~al.(2021)Chien, Peng, Li, and Milenkovic]{chien2021AdaptiveUniversalGeneralized}
Chien, E., Peng, J., Li, P., and Milenkovic, O.
\newblock Adaptive {{Universal Generalized PageRank Graph Neural Network}}.
\newblock In \emph{Proc. {{Int}}. {{Conf}}. {{Learn}}. {{Represent}}.}, 2021.

\bibitem[Chollet(2017)]{chollet2017XceptionDeepLearning}
Chollet, F.
\newblock Xception: {{Deep Learning}} with {{Depthwise Separable Convolutions}}.
\newblock In \emph{Proc. {{IEEE Conf}}. {{Comput}}. {{Vis}}. {{Pattern Recognit}}.}, 2017.

\bibitem[Corso et~al.(2020)Corso, Cavalleri, Beaini, Li{\`o}, and Veli{\v c}kovi{\'c}]{corso2020PrincipalNeighbourhoodAggregation}
Corso, G., Cavalleri, L., Beaini, D., Li{\`o}, P., and Veli{\v c}kovi{\'c}, P.
\newblock Principal {{Neighbourhood Aggregation}} for {{Graph Nets}}.
\newblock In \emph{Adv. {{Neural Inf}}. {{Process}}. {{Syst}}.}, 2020.

\bibitem[Defferrard et~al.(2016)Defferrard, Bresson, and Vandergheynst]{defferrard2016ConvolutionalNeuralNetworks}
Defferrard, M., Bresson, X., and Vandergheynst, P.
\newblock Convolutional {{Neural Networks}} on {{Graphs}} with {{Fast Localized Spectral Filtering}}.
\newblock In \emph{Adv. {{Neural Inf}}. {{Process}}. {{Syst}}.}, 2016.

\bibitem[Dosovitskiy et~al.(2021)Dosovitskiy, Beyer, Kolesnikov, Weissenborn, Zhai, Unterthiner, Dehghani, Minderer, Heigold, Gelly, Uszkoreit, and Houlsby]{dosovitskiy2021ImageWorth16x16}
Dosovitskiy, A., Beyer, L., Kolesnikov, A., Weissenborn, D., Zhai, X., Unterthiner, T., Dehghani, M., Minderer, M., Heigold, G., Gelly, S., Uszkoreit, J., and Houlsby, N.
\newblock An {{Image}} is {{Worth}} 16x16 {{Words}}: {{Transformers}} for {{Image Recognition}} at {{Scale}}.
\newblock In \emph{Proc. {{Int}}. {{Conf}}. {{Learn}}. {{Represent}}.}, 2021.

\bibitem[Dwivedi \& Bresson(2021)Dwivedi and Bresson]{dwivedi2021GeneralizationTransformerNetworks}
Dwivedi, V.~P. and Bresson, X.
\newblock A {{Generalization}} of {{Transformer Networks}} to {{Graphs}}.
\newblock In \emph{Proc. {{AAAI Workshop Deep Learn}}. {{Graphs}}: {{Methods Appl}}.}, 2021.

\bibitem[Dwivedi et~al.(2022{\natexlab{a}})Dwivedi, Joshi, Laurent, Bengio, and Bresson]{dwivedi2022BenchmarkingGraphNeural}
Dwivedi, V.~P., Joshi, C.~K., Laurent, T., Bengio, Y., and Bresson, X.
\newblock Benchmarking {{Graph Neural Networks}}.
\newblock \emph{J. Mach. Learn. Res.}, December 2022{\natexlab{a}}.

\bibitem[Dwivedi et~al.(2022{\natexlab{b}})Dwivedi, Luu, Laurent, Bengio, and Bresson]{dwivedi2022GraphNeuralNetworks}
Dwivedi, V.~P., Luu, A.~T., Laurent, T., Bengio, Y., and Bresson, X.
\newblock Graph {{Neural Networks}} with {{Learnable Structural}} and {{Positional Representations}}.
\newblock In \emph{Proc. {{Int}}. {{Conf}}. {{Learn}}. {{Represent}}.}, 2022{\natexlab{b}}.

\bibitem[Dwivedi et~al.(2022{\natexlab{c}})Dwivedi, Ramp{\'a}{\v s}ek, Galkin, Parviz, Wolf, Luu, and Beaini]{dwivedi2022LongRangeGraph}
Dwivedi, V.~P., Ramp{\'a}{\v s}ek, L., Galkin, M., Parviz, A., Wolf, G., Luu, A.~T., and Beaini, D.
\newblock Long {{Range Graph Benchmark}}.
\newblock In \emph{Adv. {{Neural Inf}}. {{Process}}. {{Syst}}. {{Track Datasets Benchmarks}}}, 2022{\natexlab{c}}.

\bibitem[Frasca et~al.(2020)Frasca, Rossi, Eynard, Chamberlain, Bronstein, and Monti]{frasca2020SIGNScalableInception}
Frasca, F., Rossi, E., Eynard, D., Chamberlain, B., Bronstein, M., and Monti, F.
\newblock {{SIGN}}: {{Scalable Inception Graph Neural Networks}}.
\newblock In \emph{Proc. {{Int}}. {{Conf}}. {{Mach}}. {{Learn}}. {{Graph Represent}}. {{Learn}}. {{Beyond Workshop}}}, 2020.

\bibitem[Gabrielsson et~al.(2023)Gabrielsson, Yurochkin, and Solomon]{gabrielsson2023RewiringPositionalEncodings}
Gabrielsson, R.~B., Yurochkin, M., and Solomon, J.
\newblock Rewiring with {{Positional Encodings}} for {{Graph Neural Networks}}.
\newblock \emph{Trans. Mach. Learn. Res.}, August 2023.

\bibitem[Gasteiger et~al.(2019{\natexlab{a}})Gasteiger, Bojchevski, and G{\"u}nnemann]{gasteiger2019PredictThenPropagate}
Gasteiger, J., Bojchevski, A., and G{\"u}nnemann, S.
\newblock Predict then {{Propagate}}: {{Graph Neural Networks}} meet {{Personalized PageRank}}.
\newblock In \emph{Proc. {{Int}}. {{Conf}}. {{Learn}}. {{Represent}}.}, 2019{\natexlab{a}}.

\bibitem[Gasteiger et~al.(2019{\natexlab{b}})Gasteiger, Wei{\ss}enberger, and G{\"u}nnemann]{gasteiger2019DiffusionImprovesGraph}
Gasteiger, J., Wei{\ss}enberger, S., and G{\"u}nnemann, S.
\newblock Diffusion {{Improves Graph Learning}}.
\newblock In \emph{Adv. {{Neural Inf}}. {{Process}}. {{Syst}}.}, volume~32, 2019{\natexlab{b}}.

\bibitem[Gilmer et~al.(2017)Gilmer, Schoenholz, Riley, Vinyals, and Dahl]{gilmer2017NeuralMessagePassing}
Gilmer, J., Schoenholz, S.~S., Riley, P.~F., Vinyals, O., and Dahl, G.~E.
\newblock Neural {{Message Passing}} for {{Quantum Chemistry}}.
\newblock In \emph{Proc. {{Int}}. {{Conf}}. {{Mach}}. {{Learn}}.}, 2017.

\bibitem[Gutteridge et~al.(2023)Gutteridge, Dong, Bronstein, and Di~Giovanni]{gutteridge2023DRewDynamicallyRewired}
Gutteridge, B., Dong, X., Bronstein, M., and Di~Giovanni, F.
\newblock {{DRew}}: {{Dynamically Rewired Message Passing}} with {{Delay}}.
\newblock In \emph{Proc. {{Int}}. {{Conf}}. {{Mach}}. {{Learn}}.}, 2023.

\bibitem[Hamilton et~al.(2017)Hamilton, Ying, and Leskovec]{hamilton2017InductiveRepresentationLearning}
Hamilton, W., Ying, Z., and Leskovec, J.
\newblock Inductive {{Representation Learning}} on {{Large Graphs}}.
\newblock In \emph{Adv. {{Neural Inf}}. {{Process}}. {{Syst}}.}, volume~30, 2017.

\bibitem[Hammond et~al.(2011)Hammond, Vandergheynst, and Gribonval]{hammond2011WaveletsGraphsSpectral}
Hammond, D.~K., Vandergheynst, P., and Gribonval, R.
\newblock Wavelets on {{Graphs Via Spectral Graph Theory}}.
\newblock \emph{Appl. Comput. Harmon. Anal.}, 30\penalty0 (2), March 2011.

\bibitem[He et~al.(2016{\natexlab{a}})He, Zhang, Ren, and Sun]{he2016DeepResidualLearning}
He, K., Zhang, X., Ren, S., and Sun, J.
\newblock Deep {{Residual Learning}} for {{Image Recognition}}.
\newblock In \emph{Proc. {{IEEE Conf}}. {{Comput}}. {{Vis}}. {{Pattern Recognit}}.}, 2016{\natexlab{a}}.

\bibitem[He et~al.(2016{\natexlab{b}})He, Zhang, Ren, and Sun]{he2016IdentityMappingsDeep}
He, K., Zhang, X., Ren, S., and Sun, J.
\newblock Identity {{Mappings}} in {{Deep Residual Networks}}.
\newblock In \emph{Proc. {{Eur}}. {{Conf}}. {{Comput}}. {{Vis}}.}, volume 9908, 2016{\natexlab{b}}.

\bibitem[He et~al.(2021)He, Wei, Huang, and Xu]{he2021BernNetLearningArbitrary}
He, M., Wei, Z., Huang, Z., and Xu, H.
\newblock {{BernNet}}: {{Learning Arbitrary Graph Spectral Filters}} via {{Bernstein Approximation}}.
\newblock In \emph{Adv. {{Neural Inf}}. {{Process}}. {{Syst}}.}, 2021.

\bibitem[Hendrycks \& Gimpel(2023)Hendrycks and Gimpel]{hendrycks2023GaussianErrorLinear}
Hendrycks, D. and Gimpel, K.
\newblock Gaussian {{Error Linear Units}} ({{GELUs}}).
\newblock \emph{arXiv:1606.08415}, 2023.

\bibitem[Hermosilla et~al.(2018)Hermosilla, Ritschel, V{\'a}zquez, Vinacua, and Ropinski]{hermosilla2018MonteCarloConvolution}
Hermosilla, P., Ritschel, T., V{\'a}zquez, P.-P., Vinacua, {\`A}., and Ropinski, T.
\newblock Monte {{Carlo Convolution}} for {{Learning}} on {{Non-Uniformly Sampled Point Clouds}}.
\newblock \emph{ACM Trans. Graph.}, 37\penalty0 (6), December 2018.

\bibitem[Hornik et~al.(1989)Hornik, Stinchcombe, and White]{hornik1989MultilayerFeedforwardNetworks}
Hornik, K., Stinchcombe, M., and White, H.
\newblock Multilayer {{Feedforward Networks Are Universal Approximators}}.
\newblock \emph{Neural Netw.}, 2\penalty0 (5), January 1989.

\bibitem[Hu et~al.(2020)Hu, Liu, Gomes, Zitnik, Liang, Pande, and Leskovec]{hu2020StrategiesPretrainingGraph}
Hu, W., Liu, B., Gomes, J., Zitnik, M., Liang, P., Pande, V., and Leskovec, J.
\newblock Strategies for {{Pre-training Graph Neural Networks}}.
\newblock In \emph{Proc. {{Int}}. {{Conf}}. {{Learn}}. {{Represent}}.}, 2020.

\bibitem[Hua et~al.(2018)Hua, Tran, and Yeung]{hua2018PointwiseConvolutionalNeural}
Hua, B.-S., Tran, M.-K., and Yeung, S.-K.
\newblock Pointwise {{Convolutional Neural Networks}}.
\newblock In \emph{Proc. {{IEEE Conf}}. {{Comput}}. {{Vis}}. {{Pattern Recognit}}.}, 2018.

\bibitem[Hussain et~al.(2022)Hussain, Zaki, and Subramanian]{hussain2022GlobalSelfAttentionReplacement}
Hussain, M.~S., Zaki, M.~J., and Subramanian, D.
\newblock Global {{Self-Attention}} as a {{Replacement}} for {{Graph Convolution}}.
\newblock In \emph{Proc. {{ACM SIGKDD Int}}. {{Conf}}. {{Knowl}}. {{Discov}}. {{Data Min}}.}, 2022.

\bibitem[Ioffe \& Szegedy(2015)Ioffe and Szegedy]{ioffe2015BatchNormalizationAccelerating}
Ioffe, S. and Szegedy, C.
\newblock Batch {{Normalization}}: {{Accelerating Deep Network Training}} by {{Reducing Internal Covariate Shift}}.
\newblock In \emph{Proc. {{Int}}. {{Conf}}. {{Mach}}. {{Learn}}.}, 2015.

\bibitem[Irwin et~al.(2012)Irwin, Sterling, Mysinger, Bolstad, and Coleman]{irwin2012ZINCFreeTool}
Irwin, J.~J., Sterling, T., Mysinger, M.~M., Bolstad, E.~S., and Coleman, R.~G.
\newblock {{ZINC}}: {{A Free Tool}} to {{Discover Chemistry}} for {{Biology}}.
\newblock \emph{J. Chem. Inf. Model.}, 52\penalty0 (7), July 2012.

\bibitem[Jia et~al.(2016)Jia, De~Brabandere, Tuytelaars, and Gool]{jia2016DynamicFilterNetworks}
Jia, X., De~Brabandere, B., Tuytelaars, T., and Gool, L.~V.
\newblock Dynamic {{Filter Networks}}.
\newblock In \emph{Adv. {{Neural Inf}}. {{Process}}. {{Syst}}.}, volume~29, 2016.

\bibitem[Kim et~al.(2022)Kim, Nguyen, Min, Cho, Lee, Lee, and Hong]{kim2022PureTransformersAre}
Kim, J., Nguyen, D.~T., Min, S., Cho, S., Lee, M., Lee, H., and Hong, S.
\newblock Pure {{Transformers}} are {{Powerful Graph Learners}}.
\newblock In \emph{Adv. {{Neural Inf}}. {{Process}}. {{Syst}}.}, 2022.

\bibitem[Kipf \& Welling(2017)Kipf and Welling]{kipf2017SemiSupervisedClassificationGraph}
Kipf, T.~N. and Welling, M.
\newblock Semi-{{Supervised Classification}} with {{Graph Convolutional Networks}}.
\newblock In \emph{Proc. {{Int}}. {{Conf}}. {{Learn}}. {{Represent}}.}, 2017.

\bibitem[Knigge et~al.(2023)Knigge, Romero, Gu, Gavves, Bekkers, Tomczak, Hoogendoorn, and Sonke]{knigge2023ModellingLongRange}
Knigge, D.~M., Romero, D.~W., Gu, A., Gavves, E., Bekkers, E.~J., Tomczak, J.~M., Hoogendoorn, M., and Sonke, J.-j.
\newblock {Modelling Long Range Dependencies in} {$N$}{D: From Task-Specific to a General Purpose CNN}.
\newblock In \emph{Proc. Int. Conf. Learn. Represent.}, 2023.

\bibitem[Kreuzer et~al.(2021)Kreuzer, Beaini, Hamilton, L{\'e}tourneau, and Tossou]{kreuzer2021RethinkingGraphTransformers}
Kreuzer, D., Beaini, D., Hamilton, W.~L., L{\'e}tourneau, V., and Tossou, P.
\newblock Rethinking {{Graph Transformers}} with {{Spectral Attention}}.
\newblock In \emph{Adv. {{Neural Inf}}. {{Process}}. {{Syst}}.}, 2021.

\bibitem[Krizhevsky et~al.(2012)Krizhevsky, Sutskever, and Hinton]{krizhevsky2012ImageNetClassificationDeep}
Krizhevsky, A., Sutskever, I., and Hinton, G.~E.
\newblock {{ImageNet Classification}} with {{Deep Convolutional Neural Networks}}.
\newblock In \emph{Adv. {{Neural Inf}}. {{Process}}. {{Syst}}.}, volume~25, 2012.

\bibitem[Lee et~al.(2019)Lee, Lee, Kim, Kosiorek, Choi, and Teh]{lee2019SetTransformerFramework}
Lee, J., Lee, Y., Kim, J., Kosiorek, A., Choi, S., and Teh, Y.~W.
\newblock Set {{Transformer}}: {{A Framework}} for {{Attention-based Permutation-Invariant Neural Networks}}.
\newblock In \emph{Proc. {{Int}}. {{Conf}}. {{Mach}}. {{Learn}}.}, 2019.

\bibitem[Li et~al.(2020)Li, Wang, Wang, and Leskovec]{li2020DistanceEncodingDesign}
Li, P., Wang, Y., Wang, H., and Leskovec, J.
\newblock Distance {{Encoding}}: {{Design Provably More Powerful Neural Networks}} for {{Graph Representation Learning}}.
\newblock In \emph{Adv. {{Neural Inf}}. {{Process}}. {{Syst}}.}, 2020.

\bibitem[Li et~al.(2018)Li, Han, and Wu]{li2018DeeperInsightsGraph}
Li, Q., Han, Z., and Wu, X.-M.
\newblock Deeper {{Insights}} into {{Graph Convolutional Networks}} for {{Semi-Supervised Learning}}.
\newblock In \emph{Proc. {{AAAI Conf}}. {{Artif}}. {{Intell}}.}, 2018.

\bibitem[Liao et~al.(2022)Liao, Zhao, Urtasun, and Zemel]{liao2022LanczosNetMultiScaleDeep}
Liao, R., Zhao, Z., Urtasun, R., and Zemel, R.
\newblock {{LanczosNet}}: {{Multi-Scale Deep Graph Convolutional Networks}}.
\newblock In \emph{Proc. {{Int}}. {{Conf}}. {{Learn}}. {{Represent}}.}, 2022.

\bibitem[Lim et~al.(2023)Lim, Robinson, Zhao, Smidt, Sra, Maron, and Jegelka]{lim2023SignBasisInvariant}
Lim, D., Robinson, J.~D., Zhao, L., Smidt, T., Sra, S., Maron, H., and Jegelka, S.
\newblock Sign and {{Basis Invariant Networks}} for {{Spectral Graph Representation Learning}}.
\newblock In \emph{Proc. {{Int}}. {{Conf}}. {{Learn}}. {{Represent}}.}, 2023.

\bibitem[Liu et~al.(2021)Liu, Lin, Cao, Hu, Wei, Zhang, Lin, and Guo]{liu2021SwinTransformerHierarchical}
Liu, Z., Lin, Y., Cao, Y., Hu, H., Wei, Y., Zhang, Z., Lin, S., and Guo, B.
\newblock Swin {{Transformer}}: {{Hierarchical Vision Transformer Using Shifted Windows}}.
\newblock In \emph{Proc. {{IEEE Int}}. {{Conf}}. {{Comput}}. {{Vis}}.}, 2021.

\bibitem[Liu et~al.(2022{\natexlab{a}})Liu, Hu, Lin, Yao, Xie, Wei, Ning, Cao, Zhang, Dong, Wei, and Guo]{liu2022SwinTransformerV2}
Liu, Z., Hu, H., Lin, Y., Yao, Z., Xie, Z., Wei, Y., Ning, J., Cao, Y., Zhang, Z., Dong, L., Wei, F., and Guo, B.
\newblock Swin {{Transformer V2}}: {{Scaling Up Capacity}} and {{Resolution}}.
\newblock In \emph{Proc. {{IEEE Conf}}. {{Comput}}. {{Vis}}. {{Pattern Recognit}}.}, 2022{\natexlab{a}}.

\bibitem[Liu et~al.(2022{\natexlab{b}})Liu, Mao, Wu, Feichtenhofer, Darrell, and Xie]{liu2022ConvNet2020s}
Liu, Z., Mao, H., Wu, C.-Y., Feichtenhofer, C., Darrell, T., and Xie, S.
\newblock A {{ConvNet}} for the 2020s.
\newblock In \emph{Proc. {{IEEE Conf}}. {{Comput}}. {{Vis}}. {{Pattern Recognit}}.}, 2022{\natexlab{b}}.

\bibitem[Loshchilov \& Hutter(2017)Loshchilov and Hutter]{loshchilov2017SGDRStochasticGradient}
Loshchilov, I. and Hutter, F.
\newblock {{SGDR}}: {{Stochastic Gradient Descent}} with {{Warm Restarts}}.
\newblock In \emph{Proc. {{Int}}. {{Conf}}. {{Learn}}. {{Represent}}.}, 2017.

\bibitem[Loshchilov \& Hutter(2019)Loshchilov and Hutter]{loshchilov2018DecoupledWeightDecay}
Loshchilov, I. and Hutter, F.
\newblock Decoupled {{Weight Decay Regularization}}.
\newblock In \emph{Proc. {{Int}}. {{Conf}}. {{Learn}}. {{Represent}}.}, 2019.

\bibitem[Loukas(2020)]{loukas2020WhatGraphNeural}
Loukas, A.
\newblock What {{Graph Neural Networks Cannot Learn}}: {{Depth Vs Width}}.
\newblock In \emph{Proc. {{Int}}. {{Conf}}. {{Learn}}. {{Represent}}.}, 2020.

\bibitem[Luo et~al.(2022)Luo, Li, Zheng, Liu, Wang, and He]{luo2022YourTransformerMay}
Luo, S., Li, S., Zheng, S., Liu, T.-Y., Wang, L., and He, D.
\newblock Your {{Transformer May Not}} be as {{Powerful}} as {{You Expect}}.
\newblock In \emph{Adv. {{Neural Inf}}. {{Process}}. {{Syst}}.}, 2022.

\bibitem[Ma et~al.(2021)Ma, Rabbany, and {Romero-Soriano}]{ma2021GraphAttentionNetworks}
Ma, L., Rabbany, R., and {Romero-Soriano}, A.
\newblock Graph {{Attention Networks}} with {{Positional Embeddings}}.
\newblock In \emph{Proc. {{Pac}}. {{Asia Conf}}. {{Knowl}}. {{Discov}}. {{Data Min}}.}, 2021.

\bibitem[Ma et~al.(2023)Ma, Lin, Lim, {Romero-Soriano}, K.~Dokania, Coates, H.S.~Torr, and Lim]{ma2023GraphInductiveBiases}
Ma, L., Lin, C., Lim, D., {Romero-Soriano}, A., K.~Dokania, P., Coates, M., H.S.~Torr, P., and Lim, S.-N.
\newblock Graph {{Inductive Biases}} in {{Transformers}} without {{Message Passing}}.
\newblock In \emph{Proc. {{Int}}. {{Conf}}. {{Mach}}. {{Learn}}.}, 2023.

\bibitem[Mialon et~al.(2021)Mialon, Chen, Selosse, and Mairal]{mialon2021GraphiTEncodingGraph}
Mialon, G., Chen, D., Selosse, M., and Mairal, J.
\newblock {{GraphiT}}: {{Encoding Graph Structure}} in {{Transformers}}.
\newblock \emph{arXiv:2106.05667}, 2021.

\bibitem[Mildenhall et~al.(2020)Mildenhall, Srinivasan, Tancik, Barron, Ramamoorthi, and Ng]{mildenhall2020NeRFRepresentingScenes}
Mildenhall, B., Srinivasan, P.~P., Tancik, M., Barron, J.~T., Ramamoorthi, R., and Ng, R.
\newblock {{NeRF}}: {{Representing Scenes}} as {{Neural Radiance Fields}} for {{View Synthesis}}.
\newblock In \emph{Proc. {{Eur}}. {{Conf}}. {{Comput}}. {{Vis}}.}, 2020.

\bibitem[Monti et~al.(2017)Monti, Boscaini, Masci, Rodol{\`a}, Svoboda, and Bronstein]{monti2017GeometricDeepLearning}
Monti, F., Boscaini, D., Masci, J., Rodol{\`a}, E., Svoboda, J., and Bronstein, M.~M.
\newblock Geometric {{Deep Learning}} on {{Graphs}} and {{Manifolds Using Mixture Model CNNs}}.
\newblock In \emph{Proc. {{IEEE Conf}}. {{Comput}}. {{Vis}}. {{Pattern Recognit}}.}, 2017.

\bibitem[Morris et~al.(2019)Morris, Ritzert, Fey, Hamilton, Lenssen, Rattan, and Grohe]{morris2019WeisfeilerLemanGo}
Morris, C., Ritzert, M., Fey, M., Hamilton, W.~L., Lenssen, J.~E., Rattan, G., and Grohe, M.
\newblock Weisfeiler and {{Leman Go Neural}}: {{Higher-Order Graph Neural Networks}}.
\newblock In \emph{Proc. {{AAAI Conf}}. {{Artif}}. {{Intell}}.}, volume~33, 2019.

\bibitem[Oono \& Suzuki(2020)Oono and Suzuki]{oono2020GraphNeuralNetworks}
Oono, K. and Suzuki, T.
\newblock Graph {{Neural Networks Exponentially Lose Expressive Power}} for {{Node Classification}}.
\newblock In \emph{Proc. {{Int}}. {{Conf}}. {{Learn}}. {{Represent}}.}, 2020.

\bibitem[Park \& Kim(2021)Park and Kim]{park2021HowVisionTransformers}
Park, N. and Kim, S.
\newblock How {{Do Vision Transformers Work}}?
\newblock In \emph{Proc. {{Int}}. {{Conf}}. {{Learn}}. {{Represent}}.}, 2021.

\bibitem[Park et~al.(2022)Park, Chang, Lee, Kim, and Hwang]{park2022GRPERelativePositional}
Park, W., Chang, W., Lee, D., Kim, J., and Hwang, S.-w.
\newblock {{GRPE}}: {{Relative Positional Encoding}} for {{Graph Transformer}}.
\newblock \emph{arXiv:2201.12787}, March 2022.

\bibitem[Qi et~al.(2017)Qi, Su, Mo, and Guibas]{qi2017PointNetDeepLearning}
Qi, C.~R., Su, H., Mo, K., and Guibas, L.~J.
\newblock {{PointNet}}: {{Deep Learning}} on {{Point Sets}} for {{3D Classification}} and {{Segmentation}}.
\newblock In \emph{Proc. {{IEEE Conf}}. {{Comput}}. {{Vis}}. {{Pattern Recognit}}.}, 2017.

\bibitem[Ramp{\'a}{\v s}ek et~al.(2022)Ramp{\'a}{\v s}ek, Galkin, Dwivedi, Luu, Wolf, and Beaini]{rampasek2022RecipeGeneralPowerful}
Ramp{\'a}{\v s}ek, L., Galkin, M., Dwivedi, V.~P., Luu, A.~T., Wolf, G., and Beaini, D.
\newblock Recipe for a {{General}}, {{Powerful}}, {{Scalable Graph Transformer}}.
\newblock In \emph{Adv. {{Neural Inf}}. {{Process}}. {{Syst}}.}, 2022.

\bibitem[Romero et~al.(2022)Romero, Kuzina, Bekkers, Tomczak, and Hoogendoorn]{romero2022CKConvContinuousKernel}
Romero, D.~W., Kuzina, A., Bekkers, E.~J., Tomczak, J.~M., and Hoogendoorn, M.
\newblock {{CKConv}}: {{Continuous Kernel Convolution For Sequential Data}}.
\newblock In \emph{Proc. {{Int}}. {{Conf}}. {{Learn}}. {{Represent}}.}, 2022.

\bibitem[Segol \& Lipman(2020)Segol and Lipman]{segol2020UniversalEquivariantSet}
Segol, N. and Lipman, Y.
\newblock On {{Universal Equivariant Set Networks}}.
\newblock In \emph{Proc. {{Int}}. {{Conf}}. {{Learn}}. {{Represent}}.}, 2020.

\bibitem[Sitzmann et~al.(2020)Sitzmann, Martel, Bergman, Lindell, and Wetzstein]{sitzmann2020ImplicitNeuralRepresentations}
Sitzmann, V., Martel, J. N.~P., Bergman, A.~W., Lindell, D.~B., and Wetzstein, G.
\newblock Implicit {{Neural Representations}} with {{Periodic Activation Functions}}.
\newblock In \emph{Adv. {{Neural Inf}}. {{Process}}. {{Syst}}.}, 2020.

\bibitem[Srinivasan \& Ribeiro(2020)Srinivasan and Ribeiro]{srinivasan2020EquivalencePositionalNode}
Srinivasan, B. and Ribeiro, B.
\newblock On the {{Equivalence}} between {{Positional Node Embeddings}} and {{Structural Graph Representations}}.
\newblock In \emph{Proc. {{Int}}. {{Conf}}. {{Learn}}. {{Represent}}.}, 2020.

\bibitem[Tan \& Le(2019)Tan and Le]{tan2019EfficientNetRethinkingModel}
Tan, M. and Le, Q.
\newblock {{EfficientNet}}: {{Rethinking Model Scaling}} for {{Convolutional Neural Networks}}.
\newblock In \emph{Proc. {{Int}}. {{Conf}}. {{Mach}}. {{Learn}}.}, 2019.

\bibitem[Tancik et~al.(2020)Tancik, Srinivasan, Mildenhall, {Fridovich-Keil}, Raghavan, Singhal, Ramamoorthi, Barron, and Ng]{tancik2020FourierFeaturesLet}
Tancik, M., Srinivasan, P., Mildenhall, B., {Fridovich-Keil}, S., Raghavan, N., Singhal, U., Ramamoorthi, R., Barron, J., and Ng, R.
\newblock Fourier {{Features Let Networks Learn High Frequency Functions}} in {{Low Dimensional Domains}}.
\newblock In \emph{Adv. {{Neural Inf}}. {{Process}}. {{Syst}}.}, volume~33, 2020.

\bibitem[Tolstikhin et~al.(2021)Tolstikhin, Houlsby, Kolesnikov, Beyer, Zhai, Unterthiner, Yung, Steiner, Keysers, Uszkoreit, Lucic, and Dosovitskiy]{tolstikhin2021MLPMixerAllMLPArchitecture}
Tolstikhin, I., Houlsby, N., Kolesnikov, A., Beyer, L., Zhai, X., Unterthiner, T., Yung, J., Steiner, A., Keysers, D., Uszkoreit, J., Lucic, M., and Dosovitskiy, A.
\newblock {{MLP-Mixer}}: {{An}} all-{{MLP Architecture}} for {{Vision}}.
\newblock In \emph{Adv. {{Neural Inf}}. {{Process}}. {{Syst}}.}, 2021.

\bibitem[T{\"o}nshoff et~al.(2023)T{\"o}nshoff, Ritzert, Wolf, and Grohe]{tonshoff2023WalkingOutWeisfeiler}
T{\"o}nshoff, J., Ritzert, M., Wolf, H., and Grohe, M.
\newblock Walking {{Out}} of the {{Weisfeiler Leman Hierarchy}}: {{Graph Learning Beyond Message Passing}}.
\newblock \emph{Trans. Mach. Learn. Res.}, August 2023.

\bibitem[Topping et~al.(2022)Topping, Giovanni, Chamberlain, Dong, and Bronstein]{topping2022UnderstandingOversquashingBottlenecks}
Topping, J., Giovanni, F.~D., Chamberlain, B.~P., Dong, X., and Bronstein, M.~M.
\newblock Understanding over-squashing and bottlenecks on graphs via curvature.
\newblock In \emph{Proc. {{Int}}. {{Conf}}. {{Learn}}. {{Represent}}.}, 2022.

\bibitem[Touvron et~al.(2023)Touvron, Bojanowski, Caron, Cord, {El-Nouby}, Grave, Izacard, Joulin, Synnaeve, Verbeek, and J{\'e}gou]{touvron2023ResMLPFeedforwardNetworks}
Touvron, H., Bojanowski, P., Caron, M., Cord, M., {El-Nouby}, A., Grave, E., Izacard, G., Joulin, A., Synnaeve, G., Verbeek, J., and J{\'e}gou, H.
\newblock {{ResMLP}}: {{Feedforward Networks}} for {{Image Classification}} with {{Data-Efficient Training}}.
\newblock \emph{IEEE Trans. Pattern Anal. Mach. Intell.}, 45\penalty0 (4), April 2023.

\bibitem[Vaswani et~al.(2017)Vaswani, Shazeer, Parmar, Uszkoreit, Jones, {Aidan N Gomez}, Kaiser, and Polosukhin]{vaswani2017AttentionAllYou}
Vaswani, A., Shazeer, N., Parmar, N., Uszkoreit, J., Jones, L., {Aidan N Gomez}, Kaiser, L., and Polosukhin, I.
\newblock Attention is {{All}} you {{Need}}.
\newblock In \emph{Adv. {{Neural Inf}}. {{Process}}. {{Syst}}.}, volume~30, 2017.

\bibitem[Veli{\v c}kovi{\'c}(2022)]{velickovic2022MessagePassingAll}
Veli{\v c}kovi{\'c}, P.
\newblock Message {{Passing All}} the {{Way Up}}.
\newblock In \emph{Proc. {{Int}}. {{Conf}}. {{Learn}}. {{Represent}}. {{Workshop Geometr}}. {{Topol}}. {{Represent}}. {{Learn}}.}, 2022.

\bibitem[Veli{\v c}kovi{\'c} et~al.(2018)Veli{\v c}kovi{\'c}, Cucurull, Casanova, Romero, Li{\`o}, and Bengio]{velickovic2018GraphAttentionNetworks}
Veli{\v c}kovi{\'c}, P., Cucurull, G., Casanova, A., Romero, A., Li{\`o}, P., and Bengio, Y.
\newblock Graph {{Attention Networks}}.
\newblock In \emph{Proc. {{Int}}. {{Conf}}. {{Learn}}. {{Represent}}.}, 2018.

\bibitem[Veli{\v c}kovi{\'c} et~al.(2020)Veli{\v c}kovi{\'c}, Ying, Padovano, Hadsell, and Blundell]{velickovic2020NeuralExecutionGraph}
Veli{\v c}kovi{\'c}, P., Ying, R., Padovano, M., Hadsell, R., and Blundell, C.
\newblock Neural {{Execution}} of {{Graph Algorithms}}.
\newblock In \emph{Proc. {{Int}}. {{Conf}}. {{Learn}}. {{Represent}}.}, 2020.

\bibitem[Velingker et~al.(2023)Velingker, Sinop, Ktena, Veli{\v c}kovi{\'c}, and Gollapudi]{velingker2023AffinityAwareGraphNetworks}
Velingker, A., Sinop, A., Ktena, I., Veli{\v c}kovi{\'c}, P., and Gollapudi, S.
\newblock Affinity-{{Aware Graph Networks}}.
\newblock In \emph{Adv. {{Neural Inf}}. {{Process}}. {{Syst}}.}, volume~36, 2023.

\bibitem[Wang et~al.(2022)Wang, Yin, Zhang, and Li]{wang2022EquivariantStablePositional}
Wang, H., Yin, H., Zhang, M., and Li, P.
\newblock Equivariant and {{Stable Positional Encoding}} for {{More Powerful Graph Neural Networks}}.
\newblock In \emph{Proc. {{Int}}. {{Conf}}. {{Learn}}. {{Represent}}.}, 2022.

\bibitem[Wang et~al.(2021)Wang, Xie, Li, Fan, Song, Liang, Lu, Luo, and Shao]{wang2021PyramidVisionTransformer}
Wang, W., Xie, E., Li, X., Fan, D.-P., Song, K., Liang, D., Lu, T., Luo, P., and Shao, L.
\newblock Pyramid {{Vision Transformer}}: {{A Versatile Backbone}} for {{Dense Prediction}} without {{Convolutions}}.
\newblock In \emph{Proc. {{IEEE Int}}. {{Conf}}. {{Comput}}. {{Vis}}.}, 2021.

\bibitem[Wang \& Zhang(2022)Wang and Zhang]{wang2022HowPowerfulAre}
Wang, X. and Zhang, M.
\newblock How {{Powerful}} are {{Spectral Graph Neural Networks}}.
\newblock In \emph{Proc. {{Int}}. {{Conf}}. {{Mach}}. {{Learn}}.}, 2022.

\bibitem[Woo et~al.(2023)Woo, Debnath, Hu, Chen, Liu, Kweon, and Xie]{woo2023ConvNeXtV2CoDesigning}
Woo, S., Debnath, S., Hu, R., Chen, X., Liu, Z., Kweon, I.~S., and Xie, S.
\newblock {{ConvNeXt V2}}: {{Co-Designing}} and {{Scaling ConvNets With Masked Autoencoders}}.
\newblock In \emph{Proc. {{IEEE Conf}}. {{Comput}}. {{Vis}}. {{Pattern Recognit}}.}, 2023.

\bibitem[Wu et~al.(2019)Wu, Qi, and Fuxin]{wu2019PointConvDeepConvolutional}
Wu, W., Qi, Z., and Fuxin, L.
\newblock {{PointConv}}: {{Deep Convolutional Networks}} on {{3D Point Clouds}}.
\newblock In \emph{Proc. {{IEEE Conf}}. {{Comput}}. {{Vis}}. {{Pattern Recognit}}.}, 2019.

\bibitem[Xiao et~al.(2021)Xiao, Singh, Mintun, Darrell, Dollar, and Girshick]{xiao2021EarlyConvolutionsHelp}
Xiao, T., Singh, M., Mintun, E., Darrell, T., Dollar, P., and Girshick, R.
\newblock Early {{Convolutions Help Transformers See Better}}.
\newblock In \emph{Adv. {{Neural Inf}}. {{Process}}. {{Syst}}.}, volume~34, 2021.

\bibitem[Xu et~al.(2019)Xu, Hu, Leskovec, and Jegelka]{xu2019HowPowerfulAre}
Xu, K., Hu, W., Leskovec, J., and Jegelka, S.
\newblock How {{Powerful}} are {{Graph Neural Networks}}?
\newblock In \emph{Proc. {{Int}}. {{Conf}}. {{Learn}}. {{Represent}}.}, 2019.

\bibitem[Xu et~al.(2018)Xu, Fan, Xu, Zeng, and Qiao]{xu2018SpiderCNNDeepLearning}
Xu, Y., Fan, T., Xu, M., Zeng, L., and Qiao, Y.
\newblock {{SpiderCNN}}: {{Deep Learning}} on {{Point Sets}} with {{Parameterized Convolutional Filters}}.
\newblock In \emph{Proc. {{Eur}}. {{Conf}}. {{Comput}}. {{Vis}}.}, 2018.

\bibitem[Ying et~al.(2021)Ying, Cai, Luo, Zheng, Ke, He, Shen, and Liu]{ying2021TransformersReallyPerform}
Ying, C., Cai, T., Luo, S., Zheng, S., Ke, G., He, D., Shen, Y., and Liu, T.-Y.
\newblock Do {{Transformers Really Perform Badly}} for {{Graph Representation}}?
\newblock In \emph{Adv. {{Neural Inf}}. {{Process}}. {{Syst}}.}, 2021.

\bibitem[You et~al.(2019)You, Ying, and Leskovec]{you2019PositionawareGraphNeural}
You, J., Ying, R., and Leskovec, J.
\newblock Position-aware {{Graph Neural Networks}}.
\newblock In \emph{Proc. {{Int}}. {{Conf}}. {{Mach}}. {{Learn}}.}, 2019.

\bibitem[You et~al.(2021)You, {Gomes-Selman}, Ying, and Leskovec]{you2021IdentityawareGraphNeural}
You, J., {Gomes-Selman}, J., Ying, R., and Leskovec, J.
\newblock Identity-aware {{Graph Neural Networks}}.
\newblock In \emph{Proc. {{AAAI Conf}}. {{Artif}}. {{Intell}}.}, 2021.

\bibitem[Zaheer et~al.(2017)Zaheer, Kottur, Ravanbakhsh, Poczos, Salakhutdinov, and Smola]{zaheer2017DeepSets}
Zaheer, M., Kottur, S., Ravanbakhsh, S., Poczos, B., Salakhutdinov, R.~R., and Smola, A.~J.
\newblock Deep {{Sets}}.
\newblock In \emph{Adv. {{Neural Inf}}. {{Process}}. {{Syst}}.}, 2017.

\bibitem[Zhang et~al.(2023)Zhang, Luo, Wang, and He]{zhang2023RethinkingExpressivePower}
Zhang, B., Luo, S., Wang, L., and He, D.
\newblock Rethinking the {{Expressive Power}} of {{GNNs}} via {{Graph Biconnectivity}}.
\newblock In \emph{Proc. {{Int}}. {{Conf}}. {{Learn}}. {{Represent}}.}, 2023.

\bibitem[Zhang et~al.(2021)Zhang, Cui, Pei, Wang, and Zhu]{zhang2021EigenGNNGraphStructure}
Zhang, Z., Cui, P., Pei, J., Wang, X., and Zhu, W.
\newblock Eigen-{{GNN}}: A {{Graph Structure Preserving Plug-in}} for {{GNNs}}.
\newblock \emph{IEEE Trans. Knowl. Data Eng.}, 2021.

\bibitem[Zhao et~al.(2021)Zhao, Dong, Ding, Kharlamov, and Tang]{zhao2021AdaptiveDiffusionGraph}
Zhao, J., Dong, Y., Ding, M., Kharlamov, E., and Tang, J.
\newblock Adaptive {{Diffusion}} in {{Graph Neural Networks}}.
\newblock In \emph{Adv. {{Neural Inf}}. {{Process}}. {{Syst}}.}, volume~34, 2021.

\bibitem[Zhao et~al.(2022)Zhao, Jin, Akoglu, and Shah]{zhao2022StarsSubgraphsUplifting}
Zhao, L., Jin, W., Akoglu, L., and Shah, N.
\newblock From {{Stars}} to {{Subgraphs}}: {{Uplifting Any GNN}} with {{Local Structure Awareness}}.
\newblock In \emph{Proc. {{Int}}. {{Conf}}. {{Learn}}. {{Represent}}.}, 2022.

\bibitem[Zhou et~al.(2023)Zhou, Wang, and Zhang]{zhou2023FacilitatingGraphNeural}
Zhou, C., Wang, X., and Zhang, M.
\newblock Facilitating {{Graph Neural Networks}} with {{Random Walk}} on {{Simplicial Complexes}}.
\newblock In \emph{Adv. {{Neural Inf}}. {{Process}}. {{Syst}}.}, volume~36, 2023.

\end{thebibliography}

\newpage
\appendix
\onecolumn

\section{Model Architecture and Implementation Details}
\label{appx:arch}

\subsection{Model Architecture}

In order to combine all the building blocks into one clear visualization, we provide an overview of the CKGCN in Figure~\ref{fig:arch}.
\vspace{-0.5cm}

\begin{figure*}[h]
    \centering
    \subfigure[]{
    \centering
    \begin{minipage}[b]{.6\linewidth}
    \includegraphics[width=\textwidth]{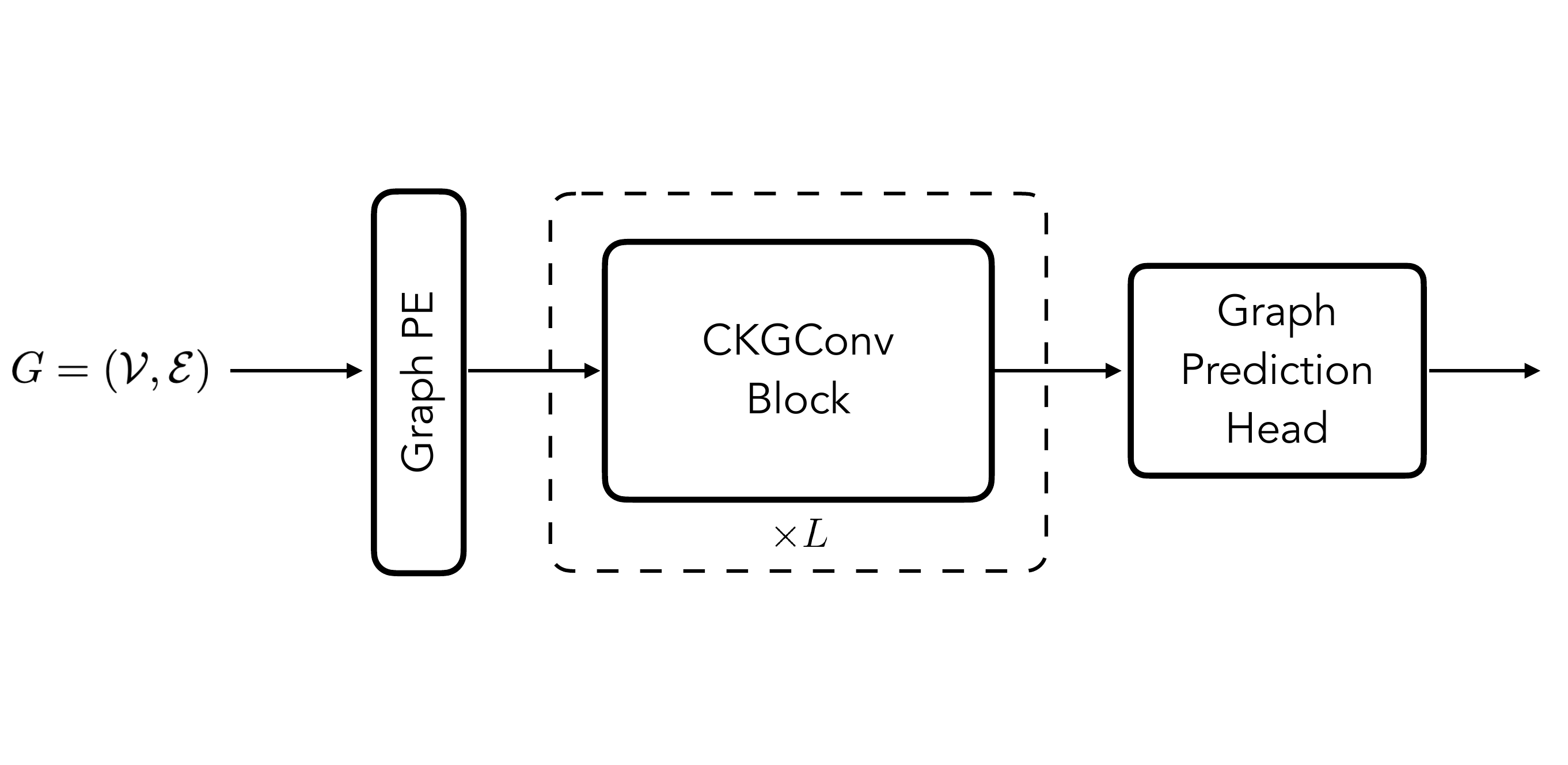}
    \vspace{0.05in}
    \end{minipage}
    }
    \subfigure[]{
    \centering
    \begin{minipage}[b]{.2\linewidth}
    \includegraphics[width=\textwidth]{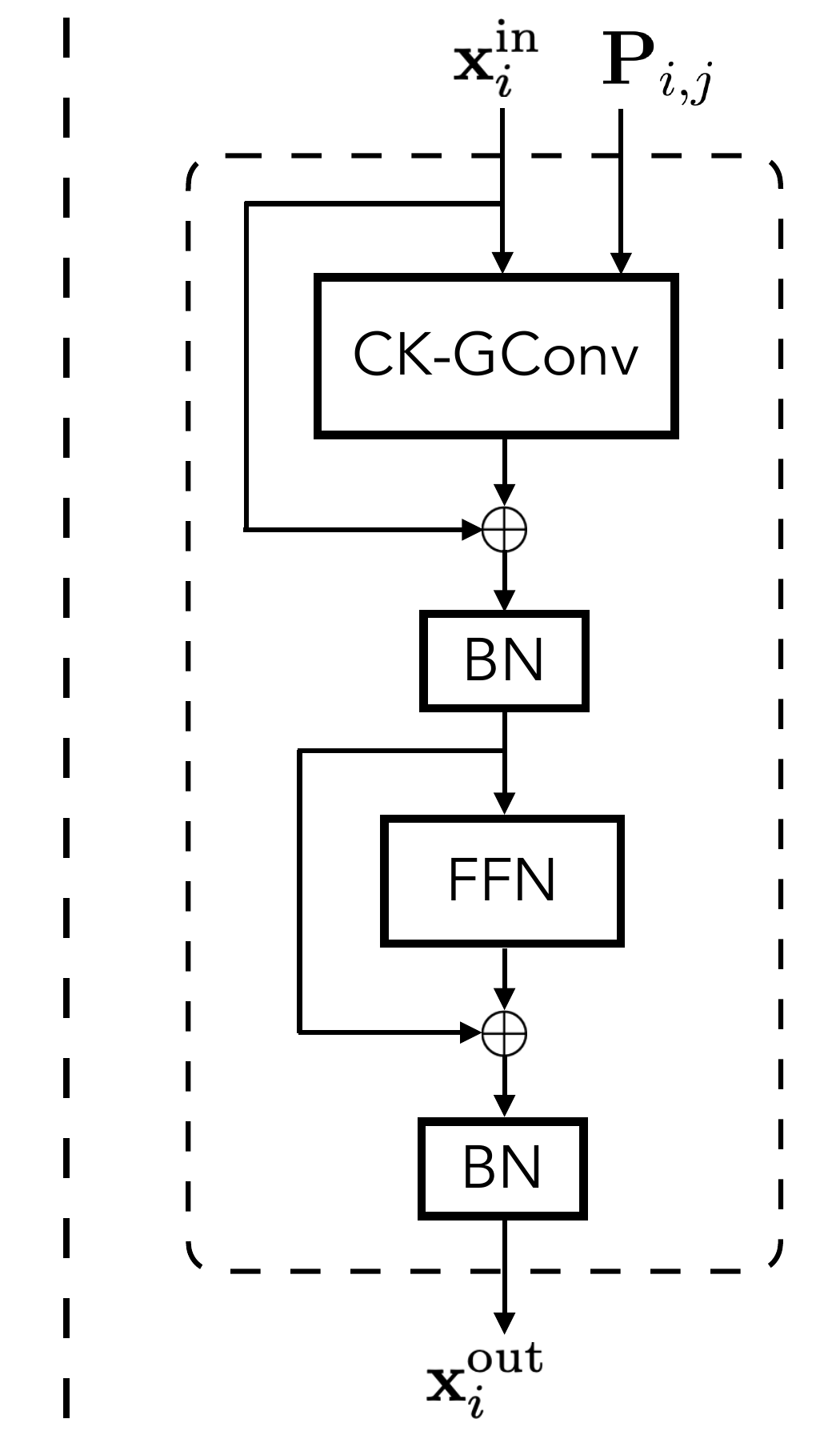}
    \end{minipage}
    }
    \caption{(a) Detailed Architecture of CKGCN with $L$ CKGConv blocks and task-dependent output head, (b) the detailed design of each CKGConv-block.}
    \label{fig:arch}
\end{figure*}

\subsection{Rescaling of RRWP}
\label{appx:re-normalization}


The expected values of random walk based graph PEs, e.g., RRWP, 
are dependent on the graph orders.
For a graph with $N$ nodes, RRWP has the property that $\sum_{j \in \cV}\P_{i,j} = \mathbf{1} \Rightarrow \mathbb{E}_{j \sim \cV}[\P_{i,j}] = \mathbf{1}/N$.
Empirically, we found that removing this dependency is beneficial to CKGConv.
Therefore, we introduce an extra re-scaling 
 for RRWP by setting $\P_{i,j} \leftarrow N \cdot \P_{i,j}$.
For other graph PEs without such dependencies, e.g,. RD and SPD, 
this re-scaling is not necessary.

Following the approaches in GraphGPS~\cite{rampasek2022RecipeGeneralPowerful},
we introduce an extra BatchNorm~\cite{ioffe2015BatchNormalizationAccelerating} on the input RRWP to further normalize the input values.

\color{black}

\section{Additional Related Work}

\paragraph{Graph Positional Encoding}
\label{appx:graph_pe}
In recent years, positional and/or structural encoding has been widely studied
to enhance the performance of MPNNs~\cite{you2019PositionawareGraphNeural,ma2021GraphAttentionNetworks,li2020DistanceEncodingDesign,zhang2021EigenGNNGraphStructure,loukas2020WhatGraphNeural,dwivedi2022GraphNeuralNetworks, lim2023SignBasisInvariant, wang2022EquivariantStablePositional, you2021IdentityawareGraphNeural, velingker2023AffinityAwareGraphNetworks, 
bouritsas2022ImprovingGraphNeural}.
Due to the inherent properties of attention mechanisms~\cite{vaswani2017AttentionAllYou, lee2019SetTransformerFramework},
Graph Transformers rely on positional/structural encoding even more excessively.
Disparate designs have been proposed by previous works, from absolute ones~\cite{dwivedi2021GeneralizationTransformerNetworks, kreuzer2021RethinkingGraphTransformers, kim2022PureTransformersAre} to relative ones~\cite{ying2021TransformersReallyPerform, zhang2023RethinkingExpressivePower, ma2023GraphInductiveBiases, mialon2021GraphiTEncodingGraph, hussain2022GlobalSelfAttentionReplacement, park2022GRPERelativePositional}.
A recent work~\cite{zhou2023FacilitatingGraphNeural} has also explored the potential of computing positional encoding on higher-order simplicial complexes instead of on nodes. 
Positional encodings prioritize distance/affinity measures and structural encodings focus on structural patterns, but most encodings incorporate
both positional and structural information~\cite{srinivasan2020EquivalencePositionalNode}.

\section{Experimental Details}
\label{appendix:experiment_details}

\subsection{Description of Datasets} 

Table~\ref{tab:dataset} provides a summary of the statistics and characteristics of datasets used in this paper. The first five datasets are from \citet{dwivedi2022BenchmarkingGraphNeural}, and the last two are from \citet{dwivedi2022LongRangeGraph}.
Readers are referred to \citet{rampasek2022RecipeGeneralPowerful} for more details about the datasets.

\begin{table}[h!]
    \centering
    \caption{Overview of the graph learning datasets involved in this work~\cite{dwivedi2022BenchmarkingGraphNeural, dwivedi2022LongRangeGraph, irwin2012ZINCFreeTool}.}
    \vskip 0.15in
    \small
    \setlength{\tabcolsep}{1.6pt}
    \resizebox{1\textwidth}{!}{
    \begin{tabular}{l|ccccccc}
    \toprule
       \textbf{Dataset} &\textbf{\# Graphs} &\textbf{Avg. \# nodes} &\textbf{Avg. \# edges}  &\textbf{Directed} 
 &\textbf{Prediction level} &\textbf{Prediction task} &\textbf{Metric}\\
 \midrule
        ZINC &12,000  &23.2 &24.9 &No &graph &regression &Mean Abs. Error \\
        MNIST &70,000  &70.6  &564.5 &Yes  &graph &10-class classif. &Accuracy \\
        CIFAR10 &60,000 &117.6 &941.1 &Yes  &graph &10-class classif. &Accuracy \\
        PATTERN &14,000 &118.9 &3,039.3  &No &inductive node &binary classif. &Weighted Accuracy \\
        CLUSTER &12,000 &117.2 &2,150.9 &No &inductive node &6-class classif. &Weighted Accuracy \\ 
        \midrule
        Peptides-func &15,535 &150.9 &307.3 &No &graph &10-task classif. &Avg. Precision \\
        Peptides-struct &15,535  &150.9 &307.3 &No &graph &11-task regression &Mean Abs. Error  \\
        \bottomrule
    \end{tabular}
    }
    \label{tab:dataset}
\end{table}

\subsection{Dataset splits and random seed}
We conduct the experiments on the standard train/validation/test splits of the evaluated benchmarks, following previous works~\cite{rampasek2022RecipeGeneralPowerful, ma2023GraphInductiveBiases}.
For each dataset, we execute 4 runs with different random seeds (0,1,2,3) and report the mean performance and standard deviation.

\subsection{Optimizer and Learning Rate Scheduler}

We use AdamW~\cite{loshchilov2018DecoupledWeightDecay} as the optimizer and the Cosine Annealing Learning Rate scheduler~\cite{loshchilov2017SGDRStochasticGradient} with linear warm up.

\subsection{Hyperparameters}
Due to the limited time and computational resources, we did not perform an exhaustive search or a grid search for the hyperparameters.
We mostly follow the hyperparameter settings of GRIT~\cite{ma2023GraphInductiveBiases}, and make slight changes to adjust the number of parameters to match the commonly used parameter budgets.

We follow the most commonly used parameter budgets: up to 500k parameters for ZINC, PATTERN, CLUSTER, Peptides-func and Peptides-struct; and around 100k parameters for MNIST and CIFAR10.

The final hyperparameters are presented in Table~\ref{tab:bmgnn_hparam} and Table~\ref{tab:lrgb_hparam}.

\begin{table}[h!]
    \centering
    \caption{Hyperparameters for five datasets from BenchmarkingGNNs \cite{dwivedi2022BenchmarkingGraphNeural}. 
    }
    \vskip 0.15in
    \label{tab:bmgnn_hparam}
\begin{tabular}{lccccc}
\toprule
Hyperparameter &ZINC &MNIST &CIFAR10 &PATTERN &CLUSTER \\
\midrule
\# CKGConv-Block &10 &4 &3 &10 &16 \\
\quad - Hidden dim &64 &48 &56 &64 &54 \\
\quad - Dropout &0 &0 &0 &0 &$0.01$ \\
\quad - Norm. &BN &BN &BN &BN &BN \\
Graph pooling &sum &mean &mean &$-$ &$-$ \\
\midrule
PE dim ($K$-RRWP) &21 &18 &18 &21 &32 \\
\midrule
Kernel Func. \\
\quad - \# MLP Block &2 &2 &2 &2 &2 \\
\quad - Norm. &BN &BN &BN &BN  &BN \\  
\quad - Kernel dropout &$0.5$ &$0.5$ &$0.5$ &$0.5$ &$0.5$ \\
\quad - MLP dropout &$0.1$ &$0.2$ &$0.$ &$0.2$ &$0.5$ 
\\
\midrule
Batch size &32 &16 &16 &16 &16 \\
Learning Rate &$0.001$ &$0.001$ &$0.001$ &$0.001$ &$0.001$ \\
\# Epochs &2000 &200 &200 &200 &200 \\
\# Warmup epochs &50 &5 &5 &10 &10 \\
Weight decay &$1 \mathrm{e}-5$ &$1 \mathrm{e}-5$ &$1 \mathrm{e}-5$ &$1 \mathrm{e}-5$ &$1 \mathrm{e}-5$ \\
Min. lr. &$1 \mathrm{e}-6$ &$1 \mathrm{e}-4$  &$1 \mathrm{e}-4$ &$1 \mathrm{e}-4$ &$1 \mathrm{e}-4$ \\
\midrule
\# Parameters &433,663 &102,580 &105,320 &438,143 &499,754 \\
\bottomrule
\end{tabular}
\end{table}

\begin{table}[ht]
    \centering
        \caption{Hyperparameters for two datasets from the Long-range Graph Benchmark~\cite{dwivedi2022LongRangeGraph}.}
        \vskip 0.15in
    \label{tab:lrgb_hparam}
    \begin{tabular}{lcc}
\midrule Hyperparameter &Peptides-func &Peptides-struct \\
\midrule 
\# CKGConv-Block  &4 &4 \\
\quad - Hidden dim &96 &96  \\
\quad - Dropout &0 & 0.05 \\
\quad - Norm. &BN  &BN \\
Graph pooling &mean &mean \\
\midrule
PE dim ($K$-RRWP) & 24 & 24 \\
\midrule
Kernel Func. \\
\quad - \# MLP Block &2 &2 \\
\quad - Norm. & BN & BN \\
\quad - Kernel dropout & 0.5  &0.2 \\
\quad - MLP dropout & 0.2 &  0.2 \\
\midrule
Batch size &16 & 16 \\
Learning Rate & 0.001 &0.001 \\
\# Epochs &200 &200 \\
\# Warmup epochs &5 &5  \\
Weight decay & 0 & 0  \\
Min. lr. & 1e-4 & 1e-4  \\
\midrule
\# Parameters & 421,468 & 412,253  \\
\bottomrule
\end{tabular}
\end{table}

\subsection{Significance Test}

We conduct a two-sample one-tailed t-test to verify the statistical significance of the difference in performance.
The baselines' results are taken from \cite{ma2023GraphInductiveBiases}.

The statistical tests are conducted using the tools available at \url{https://www.statskingdom.com/140MeanT2eq.html}.

\subsection{Runtime}

We provide the runtime and GPU memory consumption of CKGCN in comparison to GRIT on ZINC as reference (Table~\ref{tab:runtime_memory}).
The timing is conducted on a single NVIDIA V100 GPU (Cuda 11.8) and 20 threads
of Intel(R) Xeon(R) Gold 6140 CPU @ 2.30GHz.

\begin{table}[b]
    \centering
    \caption{Runtime and GPU memory for GRIT~\cite{ma2023GraphInductiveBiases} and  
    CKGCN (Ours) on ZINC with batch size $32$. 
    The timing is conducted on a single NVIDIA V100 GPU (Cuda 11.8) 
    and 20 threads of Intel(R) Xeon(R) Gold 6140 CPU @ 2.30GHz.}
    \vskip 0.1in
    \begin{tabular}{l|lll}
    \toprule
        ZINC & CKGConv & GRIT \\ \midrule
        GPU Memory &  2146 MB & 1896 MB \\
        Training time  & 35.9 sec/epoch & 39.7 sec/epoch \\ \bottomrule
    \end{tabular}
    \label{tab:runtime_memory}
\end{table}

\subsection{Efficient Implementation}
\label{appx:eff_impl}

A more efficient implementation is achievable for CKGConv with global support when the graph order is large. Based on the following derivation, we can implement an algorithm with $O(|\mathcal{V}|S)$ complexity, where $S:=\mathbb{E}_{i \sim \cV}[|\{\P_{i,j} \neq \mathbf{0}\}|]$. The complexity thus depends on the order of the RRWP and the graph structure. 
. 

\newcommand{\cS}{\mathcal{S}}
Let $\mathcal{S}_i:=\{j \in \cV:\P_{i,j} \neq \mathbf{0}\}$,
then ignoring the bias term, Eq.~\eqref{eq:ck-gconv} can be written as
\begin{align}
    (\chi \star \psi)(i) 
    &=\frac{1}{|\cV|} \left( \sum_{j \in \cS_i} \chi(j) |\cV|\cdot \psi(\P_{i,j})  + \sum_{j' \in \cV \backslash \cS_i} \chi(j) \cdot \psi(\mathbf{0}) \right)\,, \\
    &=\frac{1}{|\cV|} \left( \sum_{j \in \cS_i} \chi(j) |\cV|\cdot \big(\psi(\P_{i,j}) - \psi(\mathbf{0})\big)\right)  
    + \frac{1}{|\cV|} \left( \sum_{j \in \cV} \chi(j) \cdot \psi(\mathbf{0}) \right) \,,\\
    &=\frac{1}{|\cV|} \left( \sum_{j \in \cS_i} \chi(j) |\cV|\cdot \big(\psi(\P_{i,j}) - \psi(\mathbf{0})\big)\right)  
    + \psi(\mathbf{0}) \cdot \frac{1}{|\cV|} \left( \sum_{j \in \cV} \chi(j) \right)\,. \label{eq:eff_ckgconv}
\end{align}
The second term of Eq.~\eqref{eq:eff_ckgconv} can be computed by global-average pooling of graphs shared by all nodes in $O(|\cV|)$, and the first term requires $O(|\cV|\cdot S)$ computation on average, where $S=\frac{1}{|\cV|} \sum_{i \in \cV}|\cS_i|$.

\section{Additional Experiments: Toy Examples, Sensitivity Study, and Ablation Study}
\label{appx:additional_exp}

\begin{figure}[h!]
    \centering
    \begin{minipage}[t]{0.3\textwidth}
            \centering
            Node Signals
            \vskip 0.1in
            \resizebox{0.75\textwidth}{!}{
    	\begin{tikzpicture}
		\node[state] (s0) {$0$};
		\node[state, right of=s0] (s1) {$1$};
		\node[state, right of=s1] (s3) {$0$};
		\node[state, below of=s0] (s2) {$1$};
		\node[state, right of=s2] (s4) {$0$};
		\node[state, below of=s3] (s5) {$1$};
		\draw (s0) edge[line width=2pt] node[above]{} (s1);
		\draw (s0) edge[line width=2pt] node[above]{} (s2);
		\draw (s1) edge[line width=2pt] node[above]{} (s3);
		\draw (s2) edge[line width=2pt] node[above]{} (s4);
		\draw (s3) edge[line width=2pt] node[above]{} (s5);
		\draw (s4) edge[line width=2pt] node[above]{} (s5);
	\end{tikzpicture}
    }
        \caption{The toy example for \textit{Anti-oversmoothing}.}
        \label{fig:toy_oversmooth}
    \end{minipage} 
    \hfill
    \begin{minipage}[t]{0.65\textwidth}
            \centering
        \begin{minipage}[t]{0.49\textwidth}
            \centering
            Node Signals
            \vskip 0.1in
            \resizebox{0.98\textwidth}{!}{
    	\begin{tikzpicture}
		\node[state] (s0) {$1$};
		\node[state, right of=s0] (s1) {$1$};
		\node[state, right of=s1] (s2) {$0$};
		\node[state, right of=s2] (s3) {$0$};
		\node[state, below of=s0] (s4) {$1$};
		\node[state, right of=s4] (s5) {$1$};
		\node[state, right of=s5] (s6) {$0$};
		\node[state, right of=s6] (s7) {$0$};
		\draw (s0) edge[line width=2pt] node[above]{} (s1);
		\draw (s1) edge[line width=2pt] node[above]{} (s2);
		\draw (s2) edge[line width=2pt] node[above]{} (s3);
		\draw (s0) edge[line width=2pt] node[above]{} (s4);
		\draw (s4) edge[line width=2pt] node[above]{} (s5);
		\draw (s5) edge[line width=2pt] node[above]{} (s6);
		\draw (s6) edge[line width=2pt] node[above]{} (s7);
		\draw (s3) edge[line width=2pt] node[above]{} (s7);
	\end{tikzpicture}
    }
        \end{minipage} \hfill
        \begin{minipage}[t]{0.49\textwidth}
            \centering
            Node Labels
            \vskip 0.125in
            \resizebox{0.98\textwidth}{!}{
    	\begin{tikzpicture}
		\node[state, draw=red!55, fill=red!20] (s0) {$0$};
		\node[state, draw=red!55, fill=red!20, right of=s0] (s1) {$1$};
		\node[state, draw=red!55, fill=red!20, right of=s1] (s2) {$1$};
		\node[state, draw=red!55, fill=red!20, right of=s2] (s3) {$0$};
		\node[state, draw=red!55, fill=red!20, below of=s0] (s4) {$0$};
		\node[state, draw=red!55, fill=red!20, right of=s4] (s5) {$1$};
		\node[state, draw=red!55, fill=red!20, right of=s5] (s6) {$1$};
		\node[state, draw=red!55, fill=red!20, right of=s6] (s7) {$0$};
		\draw (s0) edge[line width=2pt] node[above]{} (s1);
		\draw (s1) edge[line width=2pt] node[above]{} (s2);
		\draw (s2) edge[line width=2pt] node[above]{} (s3);
		\draw (s0) edge[line width=2pt] node[above]{} (s4);
		\draw (s4) edge[line width=2pt] node[above]{} (s5);
		\draw (s5) edge[line width=2pt] node[above]{} (s6);
		\draw (s6) edge[line width=2pt] node[above]{} (s7);
		\draw (s3) edge[line width=2pt] node[above]{} (s7);
	\end{tikzpicture}
    }
        \end{minipage}
        \vskip 0.1in
        \caption{The toy example for \textit{Edge Detection}.}
        \label{fig:toy_ridge_detect}
    \end{minipage}
    \label{fig:enter-label}
\end{figure}

\subsection{Toy Example: CKGConv Can Mitigate Oversmoothing}
\label{sec:toy_oversmoothing}

With the ability to generate both positive and negative coefficients, 
CKGConv can learn sharpening kernels (a.k.a. high-pass filters), 
which amplify the signal differences among data points to alleviate oversmoothing. 
Here, we provide a toy example to better illustrate CKGConv's capability to prevent oversmoothing.

We consider a simple graph with node signals as shown in Fig.~\ref{fig:toy_oversmooth},
and train 2-layer and 6-layer GCNs and CKGCNs, with 5-RRWP, to predict labels that match the node signals.
In this toy example, we remove all normalization layers, dropouts, as well as residual connections.
All models are trained for 200 epochs with the Adam optimizer (initial learning rate 1e-3) to overfit this binary classification task.
We report the results of 5 trials with different random seeds in Table~\ref{tab:anti-oversmoothing}.

\begin{table}[!ht]
    \centering
    \caption{Toy Example for Anti-oversmoothing (Fig.~\ref{fig:toy_oversmooth}): Training performance for reconstruction of node signals.
    Shown is the mean $\pm$ s.d. of 5 runs with different random seeds.}
    \vskip 0.15in
    \resizebox{0.8\textwidth}{!}{
    \begin{tabular}{l|llll}
    \toprule
        Train & 2-Layer GCN & 6-Layer GCN & 2-Layer CKGCN & 6-Layer CKGCN \\ \midrule
        BCE Loss & 0.071 $\pm$ 0.044 & 0.693 $\pm$ 2e-05 & 4e-05$\pm$ 2e-05 & 0.0  $\pm$ 0.0 \\ 
        Accuracy (\%) & 100 $\pm$ 0 & 50  $\pm$ 0 & 100  $\pm$  0 & 100  $\pm$ 0 \\ \bottomrule
    \end{tabular}
    }
    \label{tab:anti-oversmoothing}
\end{table}

As shown in the results, 
both the 2-layer GCN and 2-layer CKGCN can overfit the toy example and reach 100\% accuracy.
However, a 6-layer GCN fails to reconstruct the node signals.
Applying 6 smoothing convolutions (all-positive filter coefficients) in this small network leads to the aggregated representation at each node being very similar. 
This is a typical oversmoothing effect. The network predicts all nodes to have the same label, resulting in 50\% accuracy in the toy example.
In contrast, a 6-layer CKGCN not only reaches 100\% accuracy but also achieves a lower BCE loss, showcasing its strong capability in mitigating oversmoothing.

\subsection{Toy Example: CKGConv Can Do Edge-Detection} 
\label{sec:toy_ridge_detect}

Analogous to \emph{edge-detection} in signal processing, in the graph domain, kernels with positive and negative coefficients can be used to detect the nodes with cross-community-connection (the border nodes).
In such a setting, it is essential that the sign of the filter coefficient for the central node is opposite to those of the first-hop neighbors, in order to detect differences in attributes.

We introduce a toy example to demonstrate it as shown in Fig.~\ref{fig:toy_ridge_detect}:
given a graph with simple scalar node signals that match the ``community'' of a node (0 or 1),
the goal is to identify the border nodes with node labels as 1,

In this study, we consider single-channel convolution kernels.
We compare CKGConv with three all-positive kernels: 
GCNConv~\cite{kipf2017SemiSupervisedClassificationGraph}, CKGConv+Softmax (attention-like), and CKGConv+Softplus.
CKGConv and its variants use 5-RRWP with the hidden dimension of 5 in the kernel function. 
CKGCN+Softmax (sum-aggregation) and CKGCN+Softplus (mean-aggregation) apply Softmax and Softplus on the kernel coefficients, respectively, to constrain the kernels to have positive coefficients only.

We aim to verify the upper bounds for the expressivity of the convolutions by training them to overfit the task. 
Each convolution operator is trained for 200 epochs with the Adam optimizer (learning rate 1e-2) using binary cross entropy loss (BCE loss).
We report the last training BCE loss and accuracy from 5 trials with different initializations, in Table~\ref{tab:ridge_dectect}.

\begin{table}[!ht]
    \centering
    \caption{Toy Example for Edge-detection (Fig.~\ref{fig:toy_ridge_detect}): Training performance for reconstruction of node signals.
    Shown is the mean $\pm$ s.d. of 5 runs with different random seeds.}
    \vskip 0.15in
    \resizebox{0.8\textwidth}{!}{
    \begin{tabular}{l|llll}
    \toprule
        Train & CKGConv & GCNConv &  CKGConv+Softmax & CKGConv+Softplus \\ \midrule
        BCE Loss & \textbf{2e-4 $\pm$ 1e-05} & 0.693 $\pm$ 0.001 & 0.693 $\pm$ 0 & 0.687  $\pm$ 0.049 \\ 
        Accuracy (\%) & \textbf{100  $\pm$ 0} & 50 $\pm$ 0  & 50  $\pm$  0 & 60  $\pm$ 12.25 \\ 
        \bottomrule
    \end{tabular}}
    \label{tab:ridge_dectect} 
\end{table}

From the results, it is obvious that only convolution kernels with negative and positive values (regular CKConv) can reach 100\% accuracy and achieve a low BCE loss. 
All other convolution kernels, with only positive values, fail to identify the border nodes.

This toy example explains why negative coefficients are advantageous in graph learning tasks, which require the detection of signal differences among data points. 
Similar tasks include \emph{ridge-detection} and \emph{learning on heterophilic graphs}.

\color{black}


\subsection{Sensitivity study on kernel support sizes of CKGConv}
\label{sec:ablation_khop}

The CKGConv framework allows for kernels with pre-determined non-global supports,
analogous to the regular convolution in Euclidean spaces.
In this section, we study the effect of different pre-determined support sizes based on $K$-hop neighborhoods on ZINC datasets.
The sensitivity study follows the same experimental setup as the main experiments.
The timing is conducted on a single a single NVIDIA V100 GPU (Cuda 11.8) and 20 threads
of Intel(R) Xeon(R) Gold 6140 CPU @ 2.30GHz.

\newcommand{\pp}[1]{#1}

\begin{table}[h!]
    \centering
    \caption{The sensitivity study of the support sizes ($K$-hop neighborhoods) for CKGConv kernels with $21$-RRWP.}
    \vskip 0.15in
        \resizebox{0.6\textwidth}{!}{
    \begin{tabular}{l|c|c|c}
    \toprule
       ZINC  & 
       \pp{MAE $\downarrow$ } & Run-Time (sec/epoch) & GPU-Mem (MB) \\ \midrule
       1-hop  
       & \pp{0.073 $\pm$ 0.005} & 33.1    & 1186
         \\ 
       3-hop  &  
       \pp{0.063 $\pm$ 0.002} & 33.6 &  1522
        \\
       5-hop  & 
       \pp{0.061 $\pm$ 0.004} & 35.2 & 1624 \\
       11-hop  &
       \pp{0.063 $\pm$ 0.002} &  33.2 & 2128 \\
       21-hop  &
       \pp{0.060 $\pm$ 0.002} & 34.8 & 2148 \\
       Full & 0.059 $\pm$ 0.003 &   35.4 & 2148  \\
       \bottomrule
    \end{tabular}}
    \label{tab:k_hop_kernel}
\end{table}

From the results in Table~\ref{tab:k_hop_kernel}, with the same order of RRWP, 
larger support sizes usually lead to better 
empirical performances as well as greater GPU memory consumption.
On the one hand,
the results showcase the stability in performance of CKGConv, since
all CKGCN variants with $K{>}1$ hops reach competitive performance among the existing graph models, outperforming all existing GNNs and most Graph Transformers.
On the other hand, 
the results justify the necessity of introducing graph convolutions beyond the one-hop neighborhood (a.k.a., message-passing), since the one-hop CKGCN is significantly worse than the other variants with larger kernels. 
Furthermore, the sensitivity study also highlights the flexibility of CKGConv framework in balancing the computational cost and the capacity to model long-range dependencies,
by effortlessly controlling the kernel sizes like the Euclidean convolutions. 
Note that the kernel sizes are not necessarily tied with the order of RRWP or the counterparts of other graph PEs in CKGConv.

\subsection{Ablation study on the kernel functions.}
\label{sec:ablation_kernel_func}

As depicted in Sec.~\ref{sec:poly_spectral}, 
polynomial-based GNNs can be viewed as CKGCN with linear kernel functions.
However, allowing kernel functions with non-linearity is important,
since multilayer perceptions (MLPs) with non-linear activations can be universal function approximators~\cite{hornik1989MultilayerFeedforwardNetworks} while linear functions cannot.

To better understand the effects of the choices of kernel functions, we conduct an ablation study on ZINC and PATTERN datasets, following the experimental setup of the main experiments.
We compare different CKGCN variants, using kernel functions with $0, 1, \text{ and } 2$ MLP-blocks (as shown in Eq.~\eqref{eq:mlp_block}).
The width of each variant is adjusted to reach the parameter budget under $500$ K.
Note that $0$ MLP-blocks is equivalent to a linear kernel function and $2$ MLP-blocks setting is the default in CKGCN.

\begin{table}[h!]
    \centering
    \caption{The ablation study on \# MLP blocks in the kernel function of CKGConv.}
    \vskip 0.15in
    \resizebox{0.6\textwidth}{!}{
    \begin{tabular}{l|cc|cc}
        \toprule
       \# MLP &  \multicolumn{2}{c|}{ZINC} &   \multicolumn{2}{c}{PATTERN}\\ \cmidrule{2-3} \cmidrule{4-5}
       Blocks & MAE $\downarrow$ & \# param. & W.Accuracy $\uparrow$ & \# param.
       \\ \midrule
        0  & 0.074 $\pm$ 0.005  & 487 K & 87.355 $\pm$ 0.230 & 495 K \\ 
        1  &     0.065 $\pm$ 0.005  & 438 K & 88.955 $\pm$ 0.251 &  444 K     \\
        2   &  0.059 $\pm$ 0.003 & 434 K  &   88.661 $\pm$ 0.142 &   438 K  
         \\
         \bottomrule
    \end{tabular}
    }
    \label{tab:ablation_mlp_blocks}
\end{table}

From the results of the ablation study (as shown in Table~\ref{tab:ablation_mlp_blocks}), 
CKGCNs with linear kernel functions under-perform the variants
with non-linear kernel functions on both ZINC and PATTERN datasets,
even with more learnable parameters.
This observation matches our hypothesis on the indispensability of the non-linearity in kernel functions.
It also justifies the advantage of CKGConv framework over the previous polynomial GNNs which can only introduce linear kernel functions.

\section{Theory and Proof}
\newcommand{\sd}{\text{d}_{G}^{\text{SPD}}}
\newcommand{\rd}{\text{d}_{G}^{\text{RD}}}
\newcommand{\gd}{\text{d}_{G}}

\subsection{The Expressiveness of CKGConv Is Equivalent to GD-WL}
\label{appx:gd_wl}
\newcommand{\multiset}[1]{\{\!\!\{#1\}\!\!\}}
We use a Weisfeiler-Lehman (WL)-like graph isomorphism framework to analyze theoretical expressiveness.
Specifically, we consider the
\emph{Generalized Distance WL} (GD-WL) test, which is based on updating node colors incorporating graph distances proposed by \citet{zhang2023RethinkingExpressivePower}.

For a graph $\mathcal{G} = (\mathcal{V}, \mathcal{E})$,   
the iterative node color update in GD-WL test is defined as:
\begin{align}
\chi^{\ell}_{\mathcal{G}}(v) = \textit{hash}(\{\!\!\{(d_{\mathcal{G}}(v, u), \chi^{\ell-1}_{\mathcal{G}}(u)):u \in \mathcal{V}\}\!\!\})\,.
\label{eq:GD-WL}
\end{align}
where $d_{\mathcal{G}}(v, u)$ denotes a distance between nodes $v$ and $u$, and $\chi_G^0(v)$ is the initial color of $v$. 
The multiset of final node colors $\multiset{\chi_G^L(v): v \in \V}$ at iteration $L$ is hashed to obtain a graph color. 

Our proof for the expressiveness of CKGConv employs the following lemma provided by~\citet{xu2019HowPowerfulAre}. 

\begin{lemma}{(Lemma 5 of ~\citet{xu2019HowPowerfulAre})}
\label{lemma:hash_agg-sp}
     For any countable set $\cX$,
     there exists a function $f:\cX \to \RR^n$ such that
     $h(\hat{\cX}):=\sum_{x \in \hat{\cX}} f(x)$ is unique for each multiset $\hat{\cX} \in \cX$ of bounded size.
     Moreover, for some function $\phi$,
     any multiset function $g$ can be decomposed as $g(\hat{\cX})=\phi(\sum_{x \in \hat{\cX}} f(x))$.
\end{lemma}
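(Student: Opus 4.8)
The plan is to realize the summation $h$ as an injective encoding of a finite multiset into a positional (base-$b$) numeral, exploiting the two hypotheses in complementary ways: countability of $\cX$ supplies an index that assigns each element of $\cX$ its own ``digit position,'' while the uniform bound on multiset size guarantees that the multiplicity occupying each position never reaches the chosen base, so that no carrying occurs and the digit sequence (hence the multiset) is recoverable from the sum. Throughout I would treat the case $n=1$, which already suffices for the stated conclusion.

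Concretely, first I would fix an injection $Z \colon \cX \to \mathbb{N}$, which exists because $\cX$ is countable. Let $N$ be the uniform bound on the cardinality of the admissible multisets, so that every element of $\cX$ occurs with multiplicity at most $N$ in any such $\hat{\cX}$. I would set the base $b := N+1$ and define $f \colon \cX \to \RR$ by $f(x) := b^{-Z(x)}$. Writing $c_x \in \{0,1,\dots,N\}$ for the multiplicity of $x$ in $\hat{\cX}$ (with only finitely many $c_x$ nonzero), the sum becomes
\begin{align}
    h(\hat{\cX}) \;=\; \sum_{x \in \hat{\cX}} f(x) \;=\; \sum_{x \in \cX} c_x \, b^{-Z(x)}\,,
\end{align}
which is a finite sum, so no question of convergence arises.

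Next I would establish that $h$ is injective on admissible multisets via uniqueness of base-$b$ expansions. Suppose $\hat{\cX} \neq \hat{\cX}'$ have multiplicity functions $(c_x)$ and $(c_x')$; let $m$ be the smallest value of $Z(x)$ at which the two disagree. The term at position $m$ contributes magnitude at least $b^{-m}$ to $h(\hat{\cX}) - h(\hat{\cX}')$, since $\lvert c_x - c_x'\rvert \ge 1$ there. The contribution of all higher-index terms is, in absolute value, a \emph{finite} partial sum of the positive series $\sum_{k>m} N\,b^{-k} = b^{-m}$, and is therefore \emph{strictly} less than $b^{-m}$. Consequently $\lvert h(\hat{\cX}) - h(\hat{\cX}')\rvert > 0$, so the multiplicity function, and hence the multiset, is uniquely determined by $h(\hat{\cX})$.

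Finally, for the decomposition claim, injectivity of $h$ makes $h^{-1}$ well defined on the image of $h$ restricted to admissible multisets. Given any multiset function $g$, I would set $\phi := g \circ h^{-1}$ on that image and extend it arbitrarily (say by zero) off the image; then $g(\hat{\cX}) = \phi\bigl(\sum_{x \in \hat{\cX}} f(x)\bigr)$ for every admissible $\hat{\cX}$, as required. I expect the main obstacle to be the strictness in the separation step: one must confirm that only finitely many digits are nonzero, that the base $b = N+1$ strictly exceeds every attainable multiplicity, and that the geometric-tail bound is \emph{strict} precisely because each multiset is finite. All of this rests on the size bound $N$ rather than on any property of $\cX$ beyond countability, so isolating exactly where each hypothesis enters is the delicate part.
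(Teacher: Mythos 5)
Your proposal is correct, and it is essentially the standard argument for this lemma: the paper itself imports the statement from \citet{xu2019HowPowerfulAre} without reproving it, and the proof given there is exactly your construction, namely $f(x)=b^{-Z(x)}$ viewed as a compressed one-hot/base-$b$ digit encoding whose carry-free positional representation makes $h$ injective, with $\phi:=g\circ h^{-1}$ giving the decomposition. If anything, your write-up is slightly more careful than the original sketch, since you make explicit the choice $b=N+1$ and the strict geometric-tail inequality that rules out collisions.
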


\begin{proof}[Proof of Proposition~\ref{prop:ckgconv_gdwl}]
In this proof, we consider shortest-path distance (SPD) as an example of generalized distance (GD). This is denoted as $\sd$
and is assumed to construct the pseudo-coordinates in CKGConv.
The proof holds with other GDs such as the resistance distance (RD)~\cite{zhang2023RethinkingExpressivePower} and RRWP~\cite{ma2023GraphInductiveBiases}, and the choice of GD determines the practical expressiveness of GD-WL.

We consider all graphs with at most $n$ nodes to distinguish in the isomorphism tests.
The total number of possible values of $\gd$ is finite and depends on $n$ (upper bounded by $n^2$).
We define
\begin{align}
    \mathcal{D}_n = \{\sd(u,v): G=(\cV, \cE), |\cV| \leqslant n, u, v\in \cV\}\,,\label{eq:D_n}
\end{align} 
to denote all possible values of $\sd(u,v)$ for any graphs with at most $n$ nodes. 
We note that since $\mathcal{D}_n$ is a finite set, its elements can be listed as $\mathcal{D}_n=\{
d_{G,1},\cdots,d_{G,|\mathcal{D}_n|}\}$.

Then the GD-WL aggregation at the $\ell$-th iteration in Eq.~\eqref{eq:GD-WL} can be equivalently rewritten as (See Theorem E.3 in~\citet{zhang2023RethinkingExpressivePower}):
\begin{align}
   &\chi^{\ell}_G(v):= \text{hash}\Big(\chi_G^{\ell,1}(v), \chi_G^{\ell,2}(v), \cdots, \chi_G^{\ell, |\cD_n|}(v) \Big) \,,\nonumber \\
   \text{where } &\chi_G^{\ell,k}(v):= 
   \multiset{\chi_G^{\ell-1}(u): u \in \cV, \gd(u,v)=d_{G,k} }\,.\label{eq:gd_wl_hash}
\end{align}

In other words, for each node $v$, we can perform a color update by hashing a tuple of color multisets. We construct the $k$-th  multiset by injectively aggregating
the colors of all nodes $u \in \cV$ at a distance $d_{G,k}$ from node $v$.

Assuming the color of each node $\chi^t_G(v)$ is represented as a vector $\bx_v^{(l)} \in \RR^C$, and setting the bias $\mathbf{b}$ to $\mathbf{0}$ for simplicity, the $l$-th CK-GConv layer with a global support (as shown in Eq.~\eqref{eq:ck-gconv-dw}) can be written as
\begin{equation}
    \hat{\mathbf{x}}^{(l)}_v :=  \frac{1}{|\cV|}\sum_{u \in \cV} (\mathbf{W} \mathbf{x}^{(l)}_u) \odot \boldsymbol{\psi}\big(\gd(u,v)\big) \,.   
    \label{eq:ck-gconv-dw-proof}
\end{equation}
where $\boldsymbol{\psi}: \RR \to \RR^C$ and $\mathbf{W} \in \RR^{C \times C}$ is the learnable weight.
Then, we will show that with certain choices of the kernel function, 
a CKGCN is as powerful as GD-WL.

First, we define the kernel function $\boldsymbol{\psi}$ as a composition of $H$ sub-kernel functions $ 
\{\boldsymbol{\psi}^h:\RR \to \RR^F\}_{h=1,\ldots, H}$ such that $\boldsymbol{\psi}(d) = [\boldsymbol{\psi}^1(d) \| \dots \| \boldsymbol{\psi}^H(d)] \in \RR^C, \forall d \in \cD_n$, where $[\cdot \| \cdot]$ denotes the concatenation of vectors and $C=H\cdot F$.

Then Eq.~\eqref{eq:ck-gconv-dw-proof} can be written as
\begin{align}
    \hat{\mathbf{x}}^{(l), h}_v :=&\frac{1}{|\cV|}\sum_{u \in \cV} (\mathbf{W}^h \mathbf{x}^{(l)}_u) \odot \boldsymbol{\psi}^h\big(\gd(u,v)\big) \,,  
    \label{eq:ck-gconv-dw-subkernel} \\
    \hat{\mathbf{x}}^{(l)}_v =&[\hat{\mathbf{x}}^{(l), 1}_v \| \cdots \| \hat{\mathbf{x}}^{(l), H}_v ]\,,
    \label{eq:ck-gconv-dw-concat} 
\end{align}
where $\bW \in \RR^{C \times C}$ is partitioned as $[{\bW^1}^\intercal, \cdots, {\bW^H}^\intercal]^\intercal$ so that each $\bW^h \in \RR^{F \times C}$.

We construct $\boldsymbol{\psi}^h(d):=\mathbb{I}(d=d_{G,h}) \cdot \mathbf{1}$, where $\mathbb{I}:\RR \to \RR$ is the indicator function, $d_{G,h} \in \cD_n$ is a pre-determined condition, and $\mathbf{1} \in \RR^F$.
Then, the convolution by each sub-kernel (Eq.~\eqref{eq:ck-gconv-dw-subkernel} can be written as
\begin{equation}
    \begin{aligned}
        \hat{\mathbf{x}}^{(l), h}_v :=&\frac{1}{|\cV|}\sum_{u \in \cV} (\mathbf{W}^h \mathbf{x}^{(l)}_u) \odot \boldsymbol{\psi}^h\big(\gd(u,v)\big) \,,  \\
        =&\frac{1}{|\cV|}\sum_{u \in \cV} (\mathbf{W}^h \mathbf{x}^{(l)}_u) \odot (\mathbb{I}(\gd(u,v)=d_{G,h})\cdot \mathbf{1} \big) \,, \\
        =&\frac{1}{|\cV|}\sum_{u \in \cV} (\mathbf{W}^h \mathbf{x}^{(l)}_u) \cdot \mathbb{I}(\gd(u,v)=d_{G,h})  \,,\\
        =&\frac{1}{|\cV|} \sum_{\gd(u,v)=d_{G,h}} {\bW}^h \bx^{(l)}_u\,.
    \end{aligned}
    \label{eq:ckgconv_grouped}
\end{equation}
Note that ${\bW}$ can be absorbed as the last layer of the feed-forward network (FFN) in the previous layer.
Because $\bx^{(l)}_u$ is processed by the FFN in the previous layer,
we can invoke Lemma~\ref{lemma:hash_agg-sp} to establish that
each sub-kernel $\boldsymbol{\psi}^h$ (as in Eq.~\eqref{eq:ckgconv_grouped}) can implement an injective aggregating function for 
$\multiset{\chi_G^{t-1}(u): u \in \mathcal{V}, d_G(u,v)=d_{G,h}}$.
The concatenation in Eq.~\eqref{eq:ck-gconv-dw-concat} is an injective mapping of the tuple of multisets $\left(\chi_G^{t,1}, \cdots, \chi_G^{t, |\cD_n|} \right)$.
When any of the linear mappings has irrational weights, 
the projection will also be injective.
Therefore, one CKGConv followed by the FFN can implement the aggregation formula (Eq.~\eqref{eq:gd_wl_hash}), with a sufficiently large number of different $\boldsymbol{\psi}^h$.
Thus, the CKGCN can perform the aggregation of GD-WL.
Therefore, with a sufficiently large number of layers, CKGCN is as powerful as GD-WL in distinguishing non-isomorphic graphs, which concludes the proof.
\end{proof}

\subsection{CKGConv and Equivariant Set Neural Networks}
\label{appx:set_nn}

\begin{proof}[Proof of Proposition~\ref{prop:ckgconv_deepset}]

We prove the proposition for scalar-valued signals, which can be directly generalized to vector-valued signals.

For a globally supported CKGConv, 
given the 1-RRWP after the re-scaling (Appendix~\ref{appx:re-normalization}) 
$P_{i,i}= |\cV| \text{ and } P_{i,j}=0, \forall i,j \in \cV, i\neq j$,
denoted as $P_0$ and $P_1$ for simplicity.
Considering $\psi: \RR \to \RR$ that $\psi(x)=  \gamma \cdot x + \beta$ $\gamma, \beta \in \RR$,
Eq.~\eqref{eq:ck-gconv} can be written as
\begin{equation}
    \begin{aligned}
    (\chi \star \psi)(i) 
    &=\frac{1}{|\cV|} \left( \chi(i) (|\cV|\cdot \gamma + \beta)  + \sum_{j \in \cV; j \neq i} \chi(j) \beta \right) + b\,, \\
    &=\frac{1}{|\cV|} \chi(i) (|\cV| \cdot \gamma + \beta - \beta) + \frac{1}{|\cV|} \sum_{j \in \cV} \chi(j)\beta  +b\,, \\
    &=\gamma \cdot \chi(i) + \beta \cdot \big(\frac{1}{|\cV|} \sum_{j \in \cV} \chi(j)\big) + b\,.
    \label{eq:ck-gconv-app}
    \end{aligned}
\end{equation}
This is the general form of a layer in an \textit{Equivariant Set Network} (Eq. 8 in \citet{segol2020UniversalEquivariantSet}).
This general form can cover a wide range of set neural networks~\cite{zaheer2017DeepSets, qi2017PointNetDeepLearning}.
\end{proof}

\subsection{CKGConv, Polynomial Spectral GNNs and Diffusion Enhaned GNNs}
\label{appx:poly_spectral}

\begin{lemma}\label{lemma:A^k}
   Let $\bA \in \RR^{n \times n}$ denotes the adjacency matrix of an undirected graph $G$ and the diagonal matrix $\bD \in \RR^{n \times n}, [\bD]_{i,i}= \sum_{j \in \cV} [\bA]_{i,j}$ is the degree matrix,  
   the $k$-power of symmetric normalized adjacency matrix $\sA:= \bD^{-1/2} \bA \bD^{-1/2}$ and random walk matrix $\rw:=\bD^{-1} \bA$, 
   satisfy that
   \begin{equation}
       \sA^k  = \bD^{1/2} \rw \bD^{-1/2}, \forall k=1,2, \cdots
\end{equation}
\end{lemma}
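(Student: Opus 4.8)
The plan is to prove the identity by induction on $k$, exploiting the observation that $\sA$ and $\rw$ are \emph{similar matrices}, conjugate through the diagonal matrix $\bD^{1/2}$. (As stated the right-hand side should read $\bD^{1/2}\rw^k\bD^{-1/2}$; I prove this intended form, since the given right-hand side is independent of $k$ yet the left-hand side is not.) Throughout I assume $\bD$ is invertible, i.e.\ every node has positive degree, so that $\bD^{-1/2}$ and $\bD^{-1}$ are well defined; since $\bD$ is diagonal, all its (fractional) powers are diagonal and pairwise commute, and $\bD^{a}\bD^{b}=\bD^{a+b}$.

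First I would establish the base case $k=1$. Substituting the definitions $\sA=\bD^{-1/2}\bA\bD^{-1/2}$ and $\rw=\bD^{-1}\bA$, I compute $\bD^{1/2}\rw\bD^{-1/2}=\bD^{1/2}\bD^{-1}\bA\bD^{-1/2}$. Combining the leading diagonal factors via $\bD^{1/2}\bD^{-1}=\bD^{-1/2}$ yields exactly $\bD^{-1/2}\bA\bD^{-1/2}=\sA$, which is the base case.

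Next I would carry out the inductive step. Assuming $\sA^{k}=\bD^{1/2}\rw^{k}\bD^{-1/2}$, I write $\sA^{k+1}=\sA^{k}\sA$ and substitute the inductive hypothesis for the first factor and the base-case identity $\sA=\bD^{1/2}\rw\bD^{-1/2}$ for the second, giving $\sA^{k+1}=\bD^{1/2}\rw^{k}\bD^{-1/2}\bD^{1/2}\rw\bD^{-1/2}$. The key simplification is the telescoping cancellation $\bD^{-1/2}\bD^{1/2}=\bI$ in the middle, which collapses the expression to $\bD^{1/2}\rw^{k}\rw\bD^{-1/2}=\bD^{1/2}\rw^{k+1}\bD^{-1/2}$, completing the induction.

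I do not anticipate a genuine obstacle here: the argument is a routine conjugation-preserves-powers computation, and the entire claim could alternatively be stated in one line as $\sA^{k}=(\bD^{1/2}\rw\bD^{-1/2})^{k}=\bD^{1/2}\rw^{k}\bD^{-1/2}$, since similarity transformations commute with taking powers. The only point deserving care is the well-definedness of the diagonal fractional powers (positivity of all degrees) and the clean recombination of diagonal factors; flagging these assumptions is the sole nontrivial bookkeeping step.
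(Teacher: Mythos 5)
Your proof is correct and takes essentially the same route as the paper: the paper simply unrolls $\sA^k = (\bD^{1/2}\rw\bD^{-1/2})^k$ into a $k$-fold product and cancels the adjacent $\bD^{-1/2}\bD^{1/2}$ factors, which is exactly the telescoping step your induction formalizes (and matches the one-line ``similarity preserves powers'' remark you make yourself). You also correctly identify the typo in the statement --- the right-hand side should be $\bD^{1/2}\rw^k\bD^{-1/2}$, as the paper's own proof confirms --- and your flagging of the positive-degree assumption for invertibility of $\bD$ is sensible bookkeeping the paper leaves implicit.
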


\begin{proof}[Proof of Lemma~\ref{lemma:A^k}]

For arbitrary $k \geq 1$, we have
\begin{equation}
    \begin{aligned}
        \sA^k &= (\bD^{1/2}\rw \bD^{-1/2})^k \\
        &=  (\bD^{1/2}\rw \bD^{-1/2}) (\bD^{1/2}\rw \bD^{-1/2})\cdots (\bD^{1/2}\rw \bD^{-1/2}) \text{ 
    ,$k$ times} \\
        &= \bD^{1/2} \rw \cdots \rw \bD^{-1/2} \\
        &= \bD^{1/2} \rw^k  \bD^{-1/2} \, . 
    \end{aligned}
    \end{equation}
\end{proof}

\begin{proof}[Proof of Proposition~\ref{prop:ckgconv_spectral}] 
\newcommand{\x}{\mathbf{x}}
\renewcommand{\y}{\mathbf{y}}

Irrespective of the specific polynomial parameterization that is employed, any $K{-}1$ order \textit{Polynomial Spectral Graph Neural Network} can be defined in a general form with $\L =\mathbf{I} - \D^{-1/2}\A\D^{-1/2} \in \RR^{n \times n}$, 
parameterized by a learnable vector $\mathbf{\theta} = [\theta_0, \cdots, \theta_{K-1}]^\intercal \in \mathbb{R}^{K \times 1}$ for the
filtering of an input graph signal $\x \in \mathbb{R}^{n \times 1}$
to obtain an output graph signal $\y \in \mathbb{R}^{n \times 1}$, as follows:
\begin{align}
\y &= g_{\mathbf{\theta}}(\L)\x \,,  \nonumber\\
&= \sum_{k=0}^{K-1}\theta_k \L^k \x \,, \nonumber\\
&= \sum_{k=0}^{K-1} \theta_k \sum_{r=0}^k  
\binom{k}{r}(-1)^r 
\sA^r \x\,, \nonumber\\
&= \sum_{k=0}^{K-1}\theta'_k \sA^k \x\,. 
\label{eq:poly_spectral_gnn}
\end{align}

Here $\sA=\D^{-1/2}\A\D^{-1/2}$ and  $\theta'_k = \sum_{r=k}^{K-1} \binom{r}{k} (-1)^k\theta_r$. 
Therefore, the spectral filter  $g_{\mathbf{\theta}}(\L)$ can be represented by a linear combination of a collection of polynomial bases $\{\mathbf{I}, \sA^1, \sA^2, \cdots, \sA^{K-1}\}$.
In other words,
\begin{equation}
\begin{aligned}
   [g_{\mathbf{\theta}}(\L)]_{i,j} &= \psi([\mathbf{I}, \sA^1, \sA^2, \cdots, \sA^{K-1}]_{i,j})\,, \\
  &=  
 \psi(d_i^{1/2} [\mathbf{I}, \rw^1, \rw^2, \cdots, \rw^{K-1}]_{i,j} d_j^{-1/2})\,, \text{ using Lemma~\ref{lemma:A^k}} \\
  &=  
   d_i^{1/2} \psi([\mathbf{I}, \rw^1, \rw^2, \cdots, \rw^{K-1}]_{i,j}) d_j^{-1/2} \,, \text{ as $\psi$ is a linear projection}\\
   &=d_i^{1/2} \psi(\P_{i,j}) d_j^{-1/2}\,,\\
   &=\frac{1}{S} \cdot d_i^{1/2} \psi(S\cdot \P_{i,j}) d_j^{-1/2}\,.\\
\end{aligned}
\end{equation}
Here $\psi: \RR^K \to \RR$ is a linear projection; $d_i = \D_{i,i} \in \RR$ is the degree of node $i$; and $S \in \RR$ is the scaling term in the scaled-convolution design and the RRWP rescaling.

In other words, with the $K$-RRWP as pseudo-coordinates, 
CKGConv with a linear kernel $\psi$ can recover most \emph{polynomial spectral GNNs} in the form of Eq.~\eqref{eq:poly_spectral_gnn} irrespective of the specific polynomial parameterization that is used, if $d_i^{1/2}$ and $d_j^{-1/2}$ are injected properly, for all $i,j \in \cV$.
The result trivially holds for other Laplacian normalizations (e.g., row-normalized, max-eigenvalue normalized), where different constant multipliers are injected via adaptive degree scalers to $\P_{i,j}$.

Similarly, \textit{Polynomial Diffusion Enhanced Graph Neural Networks} employing polynomials of $\sA$ or its variants also can be represented by Eq.~\eqref{eq:poly_spectral_gnn}. Hence, the result follows.
\end{proof}

\subsection{Degree Information and Normalization Layers}
\label{appx:deg_and_LN}

Normalization layers are essential for deep neural networks.
\citet{ma2023GraphInductiveBiases} provide a thorough discussion on the impact of normalization layers on the explicit injected degree information via sum-aggregation or degree scalers,
which  
motivates our choice of BatchNorm~\cite{ioffe2015BatchNormalizationAccelerating} over LayerNorm~\cite{ba2016LayerNormalization}.

\begin{proposition}{\cite{ma2023GraphInductiveBiases}}\label{prop:ln_degree}
    Sum-aggregated node representations, degree-scaled node representations, and mean-aggregated node representations all have the same value after the application of a LayerNorm on node representations.
\end{proposition}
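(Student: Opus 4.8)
The plan is to exploit the fact that LayerNorm is invariant to multiplication of its input by a strictly positive scalar, together with the observation that the three aggregation schemes differ from one another only by exactly such a node-dependent positive scalar. The degree information is encoded solely in this scalar factor, so once LayerNorm removes it, the three representations must coincide.

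First, I would fix a node $i$ with positive degree $d_i$ and write the mean-aggregated representation as $\mathbf{m}_i = \tfrac{1}{d_i}\sum_{j} \mathbf{x}_j \in \RR^d$. The sum-aggregated representation is then $\mathbf{s}_i = d_i\,\mathbf{m}_i$, and the degree-scaled representation is $d_i^{1/2}\,\mathbf{m}_i$. Hence all three are of the common form $c_i\,\mathbf{m}_i$ with $c_i \in \{1, d_i^{1/2}, d_i\}$ a strictly positive scalar. I would make explicit that the claim concerns precisely this scalar degree-dependent factor, so any shared learnable transformation applied afterwards does not affect the argument.

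Second, I would establish the key lemma that LayerNorm is scale-invariant. Writing LayerNorm on a vector $\mathbf{v} \in \RR^d$ as $\gamma \odot (\mathbf{v} - \mu\mathbf{1})/\sigma + \beta$, where $\mu = \tfrac{1}{d}\mathbf{1}^\top\mathbf{v}$ and $\sigma = \sqrt{\tfrac{1}{d}\|\mathbf{v}-\mu\mathbf{1}\|^2}$, I would substitute $\mathbf{v} \mapsto c\,\mathbf{v}$ with $c>0$: the mean scales as $\mu \mapsto c\mu$, the centred vector as $\mathbf{v} - \mu\mathbf{1} \mapsto c(\mathbf{v}-\mu\mathbf{1})$, and the standard deviation as $\sigma \mapsto c\,\sigma$ (using $c>0$). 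The standardized quantity $(\mathbf{v}-\mu\mathbf{1})/\sigma$ is therefore unchanged, so $\mathrm{LayerNorm}(c\,\mathbf{v}) = \mathrm{LayerNorm}(\mathbf{v})$. Combining the two steps, LayerNorm applied to each of $\mathbf{s}_i$, $d_i^{1/2}\mathbf{m}_i$, and $\mathbf{m}_i$ equals $\mathrm{LayerNorm}(\mathbf{m}_i)$, which establishes the claim.

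The main obstacle I anticipate is the numerical-stability term $\epsilon$ that practical implementations insert inside the square root, giving $\sigma = \sqrt{\tfrac{1}{d}\|\mathbf{v}-\mu\mathbf{1}\|^2 + \epsilon}$; this breaks exact scale-invariance because $\sqrt{c^2\sigma_0^2 + \epsilon} \neq c\sqrt{\sigma_0^2 + \epsilon}$ in general. I would handle this by proving the result in the idealized setting $\epsilon = 0$, and remarking that for small $\epsilon$ the three outputs differ only negligibly, so the qualitative conclusion — that LayerNorm cancels the degree information carried by the aggregation scalar — remains intact. This is exactly the behavior that motivates the paper's preference for BatchNorm over LayerNorm.
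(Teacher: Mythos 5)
Your proof is correct and takes essentially the same route as the paper: both reduce all three representations to a positive, node-dependent scalar multiple of the mean-aggregated vector and then observe that the LayerNorm statistics $\mu$ and $\sigma$ scale linearly in that scalar, so standardization cancels it (the paper computes this directly with $d_i$ and remarks that $d_i$ may be replaced by any $f(d_i)$ with $f:\RR\to\RR_{>0}$, which covers your $c_i\in\{1,\,d_i^{1/2},\,d_i\}$). Your additional remark about the $\epsilon$ stabilizer breaking exact scale-invariance goes beyond the paper's idealized treatment but does not alter the argument.
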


\begin{proof}[Proof of Proposition~\ref{prop:ln_degree}]
Regardless the linear transformation in MPNN shared by nodes,
we can write the output representation for a node $i$ from a sum-aggregator as 
$\mathbf{x}^\text{sum}_i =  d_i \cdot \mathbf{x}^\text{mean}_i$, where $d_i \in \mathbb{R}$ is the degree of node $i$ and $\mathbf{x}^\text{mean}_i=[x_{i1}, \dots x_{iF}]^\top \in \mathbb{R}^F$ is the node representation from a mean-aggregator.
The layer normalization statistics for a node $i$ over all hidden units are computed as follows:
\begin{equation}
    \begin{aligned}
    &\mu^\text{sum}_i = \frac{1}{F} \sum_{j=1}^F x^\text{sum}_{ij} = 
    \frac{1}{F} \sum_{j=1}^F d_i \cdot x^\text{mean}_{ij} 
    = \frac{d_i}{F} \sum_{j=1}^F x^\text{mean}_{ij} 
    = d_i \cdot \mu^\text{mean}_i \\
    &\sigma^\text{sum}_i =
    \sqrt{\frac{1}{F}\sum_{j=1}^F (x^\text{sum}_{ij}-\mu^\text{sum})^2}
    = 
    \sqrt{\frac{d_i^2}{F}\sum_{j=1}^F (x^\text{mean}_{ij}-\mu^\text{mean})^2} = d_i \cdot \sigma_i^\text{mean}
    \end{aligned}
\end{equation}

Therefore, regardless of the elementwise affine transforms shared by all nodes, each element of 
the normalized representation 
\begin{equation}
    \begin{aligned}
        \tilde{x}_{ij}^\text{sum} = 
        \frac{({x}_{ij}^\text{sum} - \mu_i^\text{sum})}{\sigma^\text{sum}_i} 
        = 
        \frac{(d_i \cdot {x}_{ij}^\text{mean} - d_i\cdot \mu_i^\text{mean})}{d_i \cdot \sigma^\text{mean}_i}
        = 
        \frac{( {x}_{ij}^\text{mean} - \mu_i^\text{mean})}{\sigma^\text{mean}_i} = \tilde{x}_{ij}^\text{mean},
        \quad \forall i \in \mathcal{V}, \forall j=1,\dots, F, 
    \end{aligned}
\end{equation}
is the same for both sum-aggregation and mean-aggregation. 

The same conclusion can be seen for degree scalers, by simply changing $d_i$ to $f(d_i)$ in the proof, where $f: \mathbb{R} \to \mathbb{R}_{> 0}$.
\end{proof}

Note that, BatchNorm does not have such an impact on degree information,
since the normalization statistics are computed across all nodes (with different degrees) 
in each mini-batch per channel.





\end{document}